\begin{document}

%\title{Gradual Pattern Search with Constraints on Variations in Attribute Values}%\thanks{Supported by organization x.}}
%\title{Discovering Gradual Itemsets In the Presence of Noise Using Various Graduality Measures}%\thanks{Supported by organization x.}}
\title{Discovering Frequent Gradual Itemsets with Imprecise Data}
\titlerunning{Discovering Gradual Itemsets with Imprecise Data}

\author{Micha\"el Chirmeni Boujike\inst{1}%\orcidID{0000-1111-2222-3333} 
\and
Jerry Lonlac\inst{2, 5}%\orcidID{1111-2222-3333-4444}
\and
Norbert Tsopze\inst{1,4}%\orcidID{2222-3333-4444-5555} 
\and Engelbert Mephu Nguifo\inst{3}%\orcidID{3333-4444-5555-6666}
}
\authorrunning{M. Chirmeni et al.}
% First names are abbreviated in the running head.
% If there are more than two authors, 'et al.' is used.

\institute{Department of computer science - University of Yaound\'{e} I, Cameroon\\
\email{\{chir.michael47,tsopze.norbert\}@gmail.com}
\and
Univ. Lille, Research Center, IMT Lille Douai, 59500 Douai, France
\email{jerry.lonlac@imt-lille-douai.fr}
\and
University Clermont Auvergne, LIMOS CNRS UMR 6158, F-63178 Aubi\`ere
\email{engelbert.mephu\_nguifo@uca.fr}\\
%\url{http://www.springer.com/gp/computer-science/lncs} 
\and
Sorbonne University, IRD, UMMISCO, F-93143, Bondy, France, \\
%\email{tsopze.norbert@gmail.com}
\and
Department of Computer Engineering - ENSET - University of Douala, Cameroon \\
%\email{jerry.lonlac@imt-lille-douai.fr}
}
\maketitle              % typeset the header of the contribution
\begin{abstract}
The gradual patterns that model the complex co-variations of attributes of the form "\textit{The more/less X, The more/less Y}" play a crucial role in many real world applications where the amount of numerical data to manage is important, this is the biological data, or medical data. Recently, these types of patterns have caught the attention of the data mining community, where several methods have been defined to automatically extract and manage these patterns from different data models. However, these methods are often faced the problem  of managing the quantity of mined patterns, and in many practical applications, the calculation of all these patterns can prove to be intractable for the user-defined frequency threshold and the lack of focus leads to generating huge collections of patterns. 
Moreover another problem with the traditional approaches is that the concept of gradualness is defined just as an increase or a decrease. Indeed, a gradualness is considered as soon as the values of the attribute on both objects are different.
As a result, numerous quantities of patterns extracted by  traditional algorithms  can be presented to the user although their gradualness is only a noise effect in the data.
To address this issue, this paper suggests to introduce the gradualness thresholds from which to consider an increase or a decrease.  
In contrast to literature approaches, the proposed approach takes into account the distribution of attribute values, as well as the user's preferences on the gradualness threshold. 
%Our proposed algorithm gives more flexibility to select patterns verifying specific gradualness requirements.
The proposed algorithm makes it possible to extract gradual patterns on certain databases where state-of-the art gradual patterns mining algorithms fail due to too large search space.
Moreover, results from an experimental evaluation on real databases show that the proposed algorithm is scalable, efficient, and can eliminate numerous patterns that do not verify specific gradualness requirements to show a small set of patterns to the user.

%moreover, the absence of extensions associated with each extracted pattern can prevent a better perception and explanation of the gradualness.In this context, we extend the approach \cite{lonlac2018approach} by propose a new algorithm for extracting gradual patterns under time-constraint with generation extensions called \textit{GRAPGT}. In contrast to literature approaches, \textit{GRAPGT} takes into account the distribution of attribute values, as well as the user's preferences on the gradualness threshold, and also generates the corresponding extensions to the gradual patterns. It helps to reduce the number of generated patterns by eliminating the uninteresting patterns. An experimental study on real data shows the interest of our approach.

\keywords{Itemset mining, gradual itemset,  mining under constraint, noisy data}
\end{abstract}

\section{Introduction}
\label{sec:intro} 
Human reasoning is most often based on inaccurate or incomplete data; indeed, it is easy for a human being to determine if a person is short or tall without knowing its exact size; which is not the case for a computer because it processes the exact data. Transmit the faculties of human reasoning to a computer was initiated by \cite{zadeh1996fuzzy} whose purpose was to have imprecise data processed on a computer. Work based on fuzzy logic \cite{zadeh1996fuzzy} and classical logic \cite{duneja2012survey} has been carried out for the extraction of knowledge in categorical databases but has many difficulties in numerical databases.

Most works about correlations extraction \cite{vizhi2012data, jagtap2012role} on the processing of numerical databases proceed by discretizing numerical attributes, thus returning to the case of categorical data processing. But as numerical databases become important, finer dig methods are needed to extract more expressive knowledge representing the frequent variability between numerical values. To this end, the gradual patterns that model the complex co-variations of attributes of the form "\textit{the more/less X, the more/less Y"} were proposed.

Thanks to the knowledge that they provide in many applications \cite{lonlac2018approach, Di-JorioLT09}, gradual patterns have gained popularity, and several effective algorithms have been proposed to extract these patterns from numerical databases. These algorithms, which differ for the most part from the gradual semantics \cite{lonlac2018approach, Di-JorioLT09, Di-JorioLT08, laurent2010pgp}, very often come up against a very large number of extracted patterns, making their exploitation difficult by the user. Thus, some reported works propose to use constraints to prune the search space during mining process and focus on the interesting patterns. This is the case of the approach proposed by \cite{lonlac2018approach} which takes into account the temporal constraint between objects during the mining process. Such an approach is adapted to the context with a temporal order among objects and makes it possible to generate only patterns that are useful and relevant to the user \cite{lonlac2018approach}. Several other approaches like \cite{AyouniLYP10} proposed to extract closed gradual patterns which are a compact representation of the entire gradual patterns.
Although the proposed algorithms allow to significantly reduce the number of generated patterns, this patterns number remains very high in some applications. Indeed, the distribution of the data is not always uniform on all the attributes, as the gradualness is considered only in terms of increase and decrease of the attribute values, the number of extracted patterns can be more reduced and more significant for interpretation, if we integrate in the mining process a threshold of increase or decrease from which a gradualness should be considered. For example, in Medicine, the normal body temperature is around 37\textdegree C and professionals consider as fever sign the temperature greater than 38\textdegree C. This means that they can not make any decision based on temperature variation between 37\textdegree C and 38\textdegree C. 

To deal with this problem, we propose a new algorithm called \textit{GRAPGT} (GRAdual Patterns with Gradualness Threshold) to  automatically extract gradual patterns, by integrating a variation threshold from which to consider an attribute gradualness (increase/decrease). This allows to take into account the user preferences on each attribute of the database during the mining process.

Embedded in the gradual semantic proposed by \cite{lonlac2018approach} or ParaMiner \cite{NegrevergneTRM14} or in GRITE \cite{Di-JorioLT09}, this approach makes it possible to reduce the number of generated patterns. 
%In addition, it makes it possible to take into account the user preferences (in terms of variation threshold) on the variations of attribute values in the mining process and to generate the extensions corresponding to these patterns in order to better see and explain the different gradualness.

The main contributions of this paper is fourfolds:
\begin{itemize}
    \item We propose how to take into account user preferences in the mining process (in terms of variation threshold) on the variations of attribute values. 
    \item We propose to reduce the number of generated patterns through the introduction of a variation threshold from which to consider a gradualness (increase/decrease). The proposed approach let the user more precisely specify requirements about the gradualness threshold of patterns to be discovered.
   % \item We extend \textit{Apriori-TID} algorithm \cite{li2005high} for extracting both gradual itemsets with their associated extension when the database respect the temporal constraint.
    \item We study the impact of the gradualness threshold on the gradual itemsets mimings process and on the patterns to be discovered.
\item We conduct many experiments on three real world datasets and one synthetic dataset in order to validate this proposition and to compare its results to that of the original algorithms in terms of scalability, runtime, and memory consumption.
\end{itemize}

The rest of this article is organized as follows: first of all, we present some preliminary notions on the  gradual patterns mining in Section \ref{sec:prelim_relatedworks}. %\textcolor{red}{focusing on approaches that deal with temporal data}. 
In Section \ref{sec:approach}, we then describe our approach to extract gradual patterns with constraints on the variations of attribute values. Before concluding, we present and discuss experimental results in Section \ref{sec:expe}.

%%%%%%%%%%%%%%%%%%%%%%%%%%%%%%%%%%%%%%%%%%%%%%%%%%%%%%%%%%%%%%%%%%%%%%%
%%%%%%%%%%%%%%%%%%%%%%%%%%%%%%%%%%%%%%%%%%%%%%%%%%%%%%%%%%%%%%%%%%%%%%

%\section{Related work}
\section{Gradual itemsets}
\label{sec:prelim_relatedworks}

In this section, we provide some useful definitions and formally describe the problem of mining frequent gradual itemsets (patterns) by defining the support of such itemsets in data. We also present some of the state-of-the-art approaches
that automatically extract such patterns.

\subsection{Preliminary definitions}
\label{sec:prelim}
The problem of mining gradual itemsets consists in mining attribute co-variations in a numerical data set of the form \textit{"The more/less X, . . . , the more/less Y'"}. %(Ayouni et al. 2010)
We assume herein that we are given a database $\Delta$ containing a set of objects $\mathcal{T}$ that defines a relation on an attribute set $\mathcal{I}$ with numerical values.
%Let $t[i]$ denote the value of  attribute $i$ over object $t$ for all $t\in \mathcal{T}$.
Let $t.i$ denote the value of  attribute $i$ over object $t$ for all $t\in \mathcal{T}$.

To illustrate the notion of gradual itemsets, we consider the database $\Delta$ given by Table \ref{exampledataset1}. This database containing a set of objects  defining a relation on an attribute set with numerical values. It describes the blood sugar level measured in a patient with diabete for $8$ days ($t_1, \ldots, t_8$). The columns represent this rate in grams per liter (g/l) taken on an empty stomach ($a_1$), after lunch ($a_2$), after dinner ($a_3$) and at bedtime ($a_4$) while the lines represent the days of observation.

%Below, consider a database $\Delta$ containing a set of objects $\cal{T}$ described by a set of numeric attributes $\cal{I}$. $\forall t \in \cal{T}$, $i \in \cal{I}$,  $t[i]$ indicates the value of the attribute on the object. Table \ref{exampledataset1} is an example of a database with numeric attribute values, which describes the blood sugar level measured in different patients with diabetes for 5 days. The columns represent this rate in grams per liter (g/l) taken on an empty stomach (A), after lunch (B), after dinner (C) and at bedtime (D) while the lines represent the days of observation.

\begin{table}[!h]
		\begin{center}
		\renewcommand{\arraystretch}{1.8}
      \begin{tabular}{lllll}
                \hline
      \textbf{Tid}~ & ~~\textbf{$a_1$}~~ & ~~\textbf{$a_2$}~~  & ~~\textbf{$a_3$}~~  & ~~\textbf{$a_4$}  \\
                \hline
                $t_1$ & 1.18 & 2.36 & 2.36 & 1.58 \\
                \hline
                $t_2$ & 1.32 & 3.01 & 2.58 & 2.45 \\
                \hline
                $t_3$ & 1.25 & 2.56 & 2.31 & 2.25 \\
                \hline
                $t_4$ & 1.30 & 3.15 & 2.80 & 2.36 \\
                \hline
                 $t_5$ & 1.04 & 2.75 & 2.30 & 2.35 \\
                \hline
                $t_6$ & 1.48 & 3.56 & 2.75 &  2.53 \\
                \hline
                $t_7$ & 1.65 & 3.70 &  2.60 & 2.40  \\
                \hline
                $t_8$ & 1.28 & 4.08  & 3.09 &  2.90 \\
                \hline
            \end{tabular}
      \caption{Example of numerical database ($\Delta$).}
        \label{exampledataset1}
        \end{center}
\end{table}

Each attribute will hereafter be considered twice: once to indicate its increase, and another to indicate its decrease, using the $\leq$ and $\geq$ operators. This leads to new kinds of items, called gradual items. 
\begin{definition}[Gradual item]
\label{def:gradualItem}
Let $\Delta$ be a data set defined on a numerical attribute set $\mathcal{I}$. A gradual item is defined under the form $i^*$, where $i$ is an attribute of  $\mathcal{I}$ and $* \in \{\leq, \geq\}$. %be a comparison operator.
\end{definition}

A gradual item $i^*$ express a variation of an attribute $i$ associated to one of the values \textit{- increase or decrease-}.

If we consider the numerical database of Table \ref{exampledataset1}, $a_{1}^{\geq}$ (respectively $a_{1}^{\leq}$) is a gradual item meaning that the values of attribute \textit{$a_1$} are increasing (respectively decreasing). These two gradualness respectively increasing and decreasing are respected by the sequence of objects $\langle t_5, t_1, t_3, t_8,t_4, t_2, t_6, t_7 \rangle$ and $\langle t_7,  t_6, t_2, t_4,  t_8, t_3, t_1, t_5, \rangle$.

A gradual pattern expresses monotonous variations of values of several attributes and is defined as follows:

\begin{definition}[Gradual pattern]
\label{def:gradualItemset}
A gradual pattern $G = (i_1^{*_1}, ... , i_k^{*_k})$ is a non-empty set of gradual items. A $k$-itemset is an itemset containing $k$ $(k > 1)$ gradual items.
\end{definition}

In our example (see Table \ref{dataset}), $G_1 = (a_{1}^{\geq}, a_{2}^{\geq})$  is a gradual itemset means "the more the blood sugar level taken on an empty stomach, the more the blood sugar level taken after lunch". This gradual itemset is satisfied by the sequence of objects $\langle t_1, t_3, t_2, t_6, t_7 \rangle$ in Table \ref{dataset}.

The \textbf{support} (frequency) of a gradual itemset amounts to the extent to which a gradual pattern is present in a given database. Several support definitions have been proposed in the literature (e.g., \cite{Hullermeier02,CaldersGJ06,BerzalCSVS07,Di-JorioLT08,Di-JorioLT09,lonlac2018approach}), showing that gradual patterns can follow different semantics. 
We briefly describe in the next section these ways of defining the support of gradual itemsets in the data.
In order to define these supports, we introduce the following definitions:

\begin{definition}[Gradual Pattern Extension]
%\begin{definition}[List of Ordered Objects]
\label{def:ext}
Let $G = (i_1^{*_1}, ... , i_k^{*_k})$ be a gradual itemset and $s = \langle t_1, t_2, \ldots, t_n \rangle$ a sequence of objects. $s$ is an  extension of $G$ if $\forall p, 1 \le p \le k$, $\forall j, 1 \le j < n$, %the following inequality constraint is satisfied:
$t_j.i_p *_p t_{j+1}.i_p$ holds.
%\begin{equation}
%\label{eq1}
% (i_p \downarrow t_j) ~*_p~ (i_p \downarrow t_{j+1}) 
%\end{equation}
\end{definition}

From Table \ref{exampledataset1}, the sequence of objects $\langle t_1, t_3, t_2, t_6, t_7 \rangle$ is an extension of gradual itemset $G_1 = (a_{1}^{\geq}, a_{2}^{\geq})$. 
For a gradual itemset there might be several extensions. $G_1$ is also verified by the sequence $\langle t_5, t_2, t_6, t_7 \rangle$. In general, a gradual itemset is relevant to explain the gradualness occurring in these extensions.
The computation of the support value of a gradual pattern in a given database $\Delta$ amounts to the extend to which the gradual pattern is present in $\Delta$. This is assessed by considering the most representative extension, i.e., the extension having the largest size.

\begin{definition}[Gradual Itemset Support]
Let $\Delta$ be a numerical database and $G$ be a gradual itemset of $\Delta$ ($|\Delta|$ is its rows number). Let $\Delta_G = \{ s_1, \ldots, s_m\}$ be the set of all the longest sequence of objects respecting $G$
Then, $support(G, \Delta) = \frac{max \{|s_i|,~ 1 \leq i \leq m,~ s_i \in ~\Delta_G \}}{|\Delta|}$.
\end{definition}

Considering Table \ref{exampledataset1}, the longest sequence of objects that respects the gradual itemset $G_1$ is $\langle t_1, t_3, t_2, t_6, t_7 \rangle$ of size $5$. Then, $support(G_1, \Delta) = \frac{5}{8}$, meaning that five among the eight input objects can be ordered consecutively according to $G_1$.
A gradual itemset is frequent if its support is greater than or equal to a minimum threshold set by the user.

\begin{definition}[Complementary Gradual Itemset]
\label{def:patternCompl}
Let $G = (i_1^{*_1}, ..., i_k^{*_k})$ be a gradual itemset, and $c$ be a function such that "$c(\geq) = '\leq'$ and $c(\leq) = '\geq'$". Then $c(G) = (i_1^{c(*_1)}, ..., i_k^{c(*_k}) $  is the complementary (symmetric) gradual itemset of $G$.
\end{definition}

Any gradual pattern admits a complementary gradual pattern where the items are the same and the variations are all reversed. The complementary  gradual pattern of $(a_{1}^{\geq}, a_{2}^{\geq})$ is $(a_{1}^{\leq}, a_{2}^{\leq})$.

\begin{definition}[Frequent Gradual Patterns Mining Problem]
Let $\Delta$ be a numerical database and $minSupp$ a minimum  support threshold.
The problem of mining gradual patterns is to find the complete set of frequent gradual patterns of $\Delta$ with respect to $minSupp$ i.e. finding the set $\{G|~support(G, \Delta)\geq minSupp \}$.
\end{definition}

\subsection{Closed Gradual Patterns}
\label{closedGradual}
The notion  of the closure of the itemset is very important in the pattern mining task as it allows to obtain concise representation of patterns without loss of information. This notion has been widely studied in classical itemset mining framework \cite{NguifoN98, PasquierBTL99, YahiaHN06, YahiaGN09}.
It was introduced for the first time when extracting gradual patterns in \cite{AyouniLYP10}, where the authors propose a pair of functions $(f, g)$ defining a closure operator \cite{GanterW99, Mephu-Nguifo94} for the gradual patterns.
Given a list of sequences of objects  $\Delta_G$ from a database, $f$ returns the gradual itemset $G$ respecting all objects sequence in $\Delta_G$, while the function $g$ returns the set of maximal sequences of objects $Delta_G$ which respects the variations of all gradual items in $G$.
A gradual itemset $G$ is said to be closed if we have $f(g(G)) = G$.

%In \cite{Hullermeier02} the computation of the support of gradual patterns is based on linear regression. \cite{CaldersGJ06} and \cite{BerzalCSVS07} consider the proportion of  couples of tuples that verifies the constraints expressed by all the gradual items of the pattern while in \cite{Di-JorioLT09}, the support is defined as the size of the longest sequence of tuples supporting the gradual pattern. More recently,  \cite{Lonlac18} define the support of a gradual pattern respecting the temporal order as the proportion of couples of consecutive tuples supporting the gradual pattern.

\section{Related works}
\label{sec:relatedworks}

The problem of mining gradual itemsets consists of discovering frequent simultaneous attribute co-variations in numerical databases.
Several works have been devoted to the gradual itemset mining problem and many algorithms using different frameworks and formalisations have been developed to automatically extract these itemsets from numerical database.
These itemsets have been extensively used for fuzzy command (e.g., “the \textit{closer} the wall, the harder the break”). In this context, we are only interested in the knowledge revealed by such itemsets, and the focus is not on how to mine these gradual itemsets from huge databases and moreover these itemsets are not automatically extracted , they are provided by an expert, which is not always realistic in practice.
In recent years, gradual itemsets has received much attention from the data mining community and several methods have been defined for automatically extracting these itemsets from numerical data. 
%Most of the proposed methods for extracting gradual itemsets generally differ in the gradualness semantic depending of the data model to mine (temporal data, temporal data sequences, data stream, multi-relational Data, graph data, etc.). 

Most of the proposed methods for extracting gradual itemsets generally differ in their application on the basis of the type of data (temporal data, temporal data sequences, data stream, multi-relational Data, graph data, etc.) from which the pattern extraction is performed.

In the follow,  we provide an overview of the methods used by some of the most efficient gradual itemset extraction algorithms according to the data model considered and underline their advantages and limitations.

\subsection{On the gradual itemsets extraction from numerical data}

\subsubsection{Regression-based approach:}

 \cite{Hullermeier02} proposes the first interpretation of gradual dependency as a co-variation constraint and models gradual itemsets using statistical linear regression. So, linear regression is performed between the attribute pairs and the validity of the graduality is relied on the normalised mean squared error $R^2$, together with the slope of the regression line. The gradual tendencies are revealed for attribute pairs that are strongly correlated with a strong slope of the regression line.
 
 \subsubsection{Approach based on the discovery of association rules:}
\cite{BerzalCSVS07} was the first using data mining methods through an adaptation of the \textit{Apriori} algorithm to extract the gradual itemsets. The authors evaluate the support of a gradual itemset by considering the proportion of objects couples that verify the constraints expressed by all the gradual items in the itemset. In that work, the reported gradual tendencies are based on correlation between attributes. The authors consider correlation in terms of the rankings induced by the attributes and not in terms of their values. Due to memory complexity, the algorithm reported in \cite{BerzalCSVS07} is limited to the extraction of gradual itemsets with maximum length equals to $3$.

\subsubsection{Conflict sets based approach:}
In the same way \cite{Di-JorioLT08} proposes a gradual pattern extraction approach based on a heuristic for computing an approximate support value of a gradual pattern.
For a given gradual itemset, this approach removes the objects (conflicts set) which prevent the maximum number of objects in the database to be ordered according to the gradual itemset.
In this approach, the author constructs the candidates by ordering the transactions (objects) according to the values of the attributes. For the generation of a candidate gradual pattern of size $2$ for example, the author will order the transactions in a direction of graduality (increasing or decreasing) of the first item then, will associate the second item and check if the order of the transactions is respected according to the values of the item. In the case where the order is not respected, the author proceeds by a deletion of the transactions (set of conflict) preventing this order. The support of a gradual pattern is considered here as the cardinality of the longest ordered list of transactions respecting the gradual itemset.
The major limitation of this approach is that it is not complete. Indeed, the use of a heuristic implies that in some cases the frequency of a gradual pattern may be undervalued. 
%\textcolor{blue}{Thus, if the user sets a minimum threshold slightly higher than the frequency obtained, and if this frequency is not the actual frequency, this gradual item and will not be extracted.}
This phenomenon is explained by the fact that the author makes choices each time a set of conflicts is encountered. However, whatever the policy adopted for the choice to be made, it may have a loss of gradual reasons, this being explained by the fact that when generating a gradual itemset of size $n$, it is impossible to predict the best choice that will have an impact on the generation of gradual itemsets of size $n+1$. One solution is to keep all possible choices.

\subsubsection{Precedence graph based approach:}
To deal with the disadvantage revealed in approach  \cite{Di-JorioLT08}, \cite{Di-JorioLT09} proposes a more complete approach named GRITE (GRadual ITemset Extraction) to automatically extract gradual itemsets from large databases. The authors consider the same definition of the support of a gradual pattern proposed in \cite{Di-JorioLT08} and proposes a new method based on precedence graphs. In this method, the data are represented by a graph whose nodes are defined as the objects of the transactional database and the links stand for the precedence relation derived from the attributes taken into account. The author adopts a binary representation of the graph by a matrix. The support of the gradual pattern considered is defined as the length of the longest path in the graph. This approach makes it possible to efficiently generate itemsets of size $n+1$ from gradual itemsets of size $n$. 
%Indeed, if $s$ is gradual pattern generated from $s'$ and $s"$, then the corresponding matrix is $M_s=M_s'$ & $M_s"$ where the symbol $&$ denotes the bitwise operator $\cal{AND}$.

In \cite{LaurentLR09}, the authors propose an algorithm that combines the principles of several existing approaches and benefits from efficient computational properties to extract frequent gradual itemsets. In fact, they consider the formulation proposed in \cite{BerzalCSVS07}, and propose an algorithm that take into account the binary structure used in \cite{Di-JorioLT09}.  To evaluate the support of a gradual itemset, the authors use the Kendall tau ranking correlation coefficient that computes the number of object pairs which are consistent or inconsistent in the database, to be in agreement with the considered gradual itemset.

%\subsection{On the closed gradual itemsets extraction from numerical data}
\subsection{On the reduction of the number of gradual itemsets extracted}

Most of these methods come up against the problem of managing the very high quantity of extracted patterns. In practice, the number of frequent gradual itemsets can be large, making their interpretation by the expert almost impossible. One solution to reduce the number of extracted itemsets is to use the constraints in the process mining to focus on interest patterns.

%\subsubsection{Extraction closed gradual itemsets:}
%Most of these methods come up against the problem of managing the very high quantity of extracted patterns. In practice, the number of frequent gradual itemsets can be large, making their interpretation by the expert almost impossible. One solution to reduce the number of extracted itemsets is to use condensed representations of patterns. In fact, 
From a set of specific gradual itemsets, as the closed gradual patterns \cite{AyouniLYP10}, it is possible to regenerate the set of all gradual itemsets. Moreover, with the closed itemsets redundant information is avoided. 
Following this idea, \cite{DoTLNTA15} proposes an algorithm named \textit{GLCM}, based on an extension of LCM \cite{UnoKA04} algorithm principle. \textit{GLCM} permits to efficiently compute gradual itemsets over large real-world databases with a time complexity linear in the number of closed frequent gradual itemsets and a memory complexity constant w.r.t. the number of closed frequent gradual itemsets. The \textit{GLCM} algorithm exploits the binary structure proposed in \cite{Di-JorioLT09} to compute the support and the closure of gradual itemsets.
Indeed, the proposed approach in \cite{AyouniLYP10} allows to reduce the number of extracted patterns as a post-processing step which is not efficient. This approach is just a post-processing of \cite{Di-JorioLT09}.  It does not allow to benefit from the runtime and memory reduction and thus does not provide any added value for running the algorithms. The authors of \cite{DoTLNTA15} cope with this by proposing an algorithm that reduces the number of patterns during the mining process.

%In \cite{OudniLR13}, the authors propose an approach to process the contradictory gradual itemsets. Contradictory gradual itemsets are gradual patterns in the form \texttt{"the more A, the more B"} and \texttt{"the more A, the less B"}. Such itemsets can be simultaneously generated by the  gradual itemsets extraction algorithms. Thus, for managing these contradictions, the authors define a constrained variant of the gradual itemset support which, in particular, does not only depend on the considered itemset, but also on its potential contradictors. Two extraction methods are proposed: the first one consists in filtering, after all gradual itemsets have been generated, and the second one integrates the filtering process within the generation step. 

In \cite{NegrevergneTRM14}, \texttt{ParaMiner}, a generic and parallel algorithm for closed pattern mining, is proposed.
%This algorithm outperforms the state of the art gradual pattern mining algorithm. 
It is based on the principle of pattern enumeration in strongly accessible set systems and its efficiency is due to a dataset reduction technique called \textit{EL-reduction}, combined with a technique for performing dataset reduction in a parallel execution on a multi-core architecture.

\subsection{On the gradual itemsets extraction from the complex data}

Most of these algorithms use data mining techniques to extract gradual patterns. However, they are not relevant for extracting gradual patterns in certain application domains where numerical data present particular forms (e.g., temporal, stream, relational, or noisy data).
%they do not allow extracting another kind of knowledge 
So, some recent  works have instead focused on extracting  variants of gradual patterns on the numerical data supplied with specific constraints for expressing another kind of knowledge.
%For instance, 

\subsubsection{Extraction gradual itemsets from stream data:}

In \cite{NinLP10}, an approach based on B-Trees and OWA (Ordered Weighted Aggregation) operator \cite{Yager93,Yager88} is proposed to mine data streams for gradual patterns.
 \cite{PhanIMPT15} proposes the relational gradual pattern
concept, which enables to examine the correlations between attributes from a graduality point of view in multi-relational data. 

\subsubsection{Extraction gradual itemsets from noisy data:}
Fuzzy gradual patterns are revisited in \cite{AyouniYLP10} for noisy data where it is often hardly possible to compare attribute values, either because the values are taken from noisy data, or because it is difficult to consider that a small difference between two values is meaningful. An example of a fuzzy gradual pattern could be expressed as ``the closer the age of an employee to 46, the higher his/her income''.

%Other approaches have instead focused on the interest of the gradual patterns generated by the application domain. This is the case of the approaches \cite{lonlac2018approach}, where the authors are only interested in the patterns whose order of the sequences of associated objects respects the temporal order, the other patterns having no relevant meaning. The number of patterns generated is reduced using this approach. Applied to temporal data from the paleoecological domain, the approach allows to extract temporal models of evolution of paleoecological indicators. In addition to the benefit of the approach to reducing the number of patterns, the gradual patterns extracted under temporal constraints contribute to solving the problem of modeling the evolution of biodiversity.

\subsubsection{Extraction gradual itemsets from  temporal data:}

Recently, in \cite{lonlac2018approach,lonlacIDA2020}, the authors proposed an approach to extract gradual patterns in temporal data with an application on paleoecological databases to grasp functional groupings of coevolution of paleoecological indicators that model the evolution of the biodiversity over time. \cite{Ngo2018} introduces a generic method for extracting and analyzing gradual patterns in spatial data at several levels of granularity. The authors apply their method on the Health data to measure  potentially avoidable hospitalization related with both societal and financial issues in public policies.
More recently, in \cite{OwuorLO19} propose fuzzy temporal gradual patterns to integrate the fact that a temporal lag may exist between changes in some attributes and their impact on others. These fuzzy temporal gradual patterns allow to detect the cases of relevant correlations between the attributes of a database whose changes in the value one attribute causes a ripple effect on other attributes with respect to time. \cite{ShahCL19} is interested to extract gradual itemset from property graphs where the attributes of the gradual itemsets are information from the graph, and are retrieved from the graph nodes or relationships.

%Certain works are instead interested to use gradual itemsets for the classification tasks.So, \cite{ChoongDLLT09} proposes a novel classification method based on gradual rules. The authors extract from a given training dataset  gradual itemsets for the generation of gradual classification rules where the antecedent is a gradual itemset and the conclusion is a class value. These rules are used for assigning a class value to new examples. This approach deals with the problem of computing classification rules when traditional approaches fail, because considering attribute values as such is not appropriate.

\section{Mining gradual itemsets with gradual threshold constraints}
\label{sec:approach}

In order to avoid the inconsistent gradual patterns and consider the user defined gradualness, we have modified the gradual patterns mining process by introducing the notion of gradual threshold.

The gradualness or gradual threshold is very related to the domain knowledge. For this observation, we define the gradualness as the Definition \ref{grth}.

\begin{definition}[\textbf{gradual threshold}]Let $\Delta$ be a numerical database and $x$ an attribute of $\Delta$, a gradual threshold is a user defined value $\sigma_x$ such that a variation of $x$ between two tuples $t_1$ and $t_2$ of $\Delta$ considered iff $|t_1.x-t_2.x|\geq \sigma_x$\label{grth}
\end{definition}

%Let $\Delta$ be a numerical database define on a set of numerical attribute $I$.
Based on the data distribution, we propose to calculate $\sigma_x$ as follows:
\begin{enumerate}
    \item Let $x \in I$ and $sd(x)$ be a standard deviation of the values of the attribute $x$. $\sigma_x$ defined from the distribution of the values of $x$ is called the gradualness threshold of $x$. It  is calculated by the formula \ref{formule} where $k_1$ and $k_2$ are two real numbers .
    \begin{equation}
      \sigma_x = k_1 \times sd(x) + k_2 \label{formule}
    \end{equation}
    \item Let $x \in I$ and $cv(x)$ be the coefficient of variation (relative standard deviation) of the values of the attribute $x$. $\sigma_x$ is calculated from the distribution of the values of $x$ is called the gradualness threshold of $x$ and is defined as follow ($k_1$ and $k_2$ are two real numbers):
    \begin{equation}
      \sigma_x = k_1 \times cv(x) + k_2 \label{formule3}
    \end{equation}
    \item Let $x\in I$ and the component of $x$ sorted in increase order, $\sigma_x$ can also calculated as the standard deviation of the different gaps between two consecutive values of $x$. So $\sigma_x$ is calculated using the equation \ref{formule2}  where $\Delta_{x_i}=t_{i+1}.x-t_{i}.x$ .
    \begin{equation}
        \sigma_x = k_1 \times st(\Delta_{x_i}) + k_2\label{formule2}
    \end{equation}
\end{enumerate}
When $k_1 = k_2 = 0$, gradualness is considered in terms of increasing and decreasing attribute values, which brings back to the case of state of the art approaches \cite{lonlac2018approach, NegrevergneTRM14, Di-JorioLT09}.
\begin{example}
From the Table \ref{exampledataset1}, we can compute the gradual threshold for each numerical attribute. Here, we have consider $k_1=1$ and $k_2=0$.
\begin{table}[!h]
    \begin{center}
    \renewcommand{\arraystretch}{1.5}
       \begin{tabular}{|c|c|c|c|c|}
         \hline
            $x$ & \textbf{$a_1$} & \textbf{$a_2$} & \textbf{$a_3$} & \textbf{$a_4$} \\ 
           
                \hline
                $\sigma_x$ (\ref{formule}) & 0.184 & 0.596 & 0.276 & 0.368 \\
                \hline
                $\sigma_x$ (\ref{formule3}) & 0.141 & 0.189 & 0.103 & 0.157 \\
                \hline
                $\sigma_x$ (\ref{formule2}) & 0.087 & 0.245& 0.113 & 0.189\\
                \hline
            
        \end{tabular}
    \end{center}
    \caption{Example of gradual thresholds from Table \ref{exampledataset1} }\label{tab:2}
 \end{table}
\end{example}

\begin{definition}[Inconsistent gradual pattern]
\label{def:consistence}
Let $G = (i_1^{*_1}, ... , i_k^{*_k})$ be a gradual pattern and $s = \langle t_1, t_2, \ldots, t_n \rangle$ a sequence of objects that satisfy G. $G$ is consistence if and only if $\forall p, 1 \le p \le k$, $\forall j, 1 \le j < n$, $|t_{j+1}.i_p - t_j.i_p| \leq \sigma_p$. Where $\sigma_p$ is the gradual threshold of item $p$.
\end{definition}
As we stated in the introduction section, in certain domain of application such as medicine, this kind of gradual pattern does not provide information to experts, it is considered noisy or incoherent and therefore can negatively influence experts in decision making. It is important to remember that in this area, the requirements for quality of precision are very high because a bad (resp. Good) decision in most cases can kill (resp. Save) a life. In this paper, we are interested in extracting consistent or interesting gradual patterns.

\begin{definition}[Consistent gradual pattern]
Let $G = (i_1^{*_1}, ... , i_k^{*_k})$ be a gradual pattern and $s = \langle t_1, t_2, \ldots, t_n \rangle$ a sequence of objects that satisfy G. $G$ is consistence if and only if $\forall p, 1 \le p \le k$, $\forall j, 1 \le j < n$, $|t_{j+1}.i_p - t_j.i_p| > \sigma_p$. Where $\sigma_p$ is the gradual threshold of item $p$.
\end{definition}

Consider Table \ref{exampledataset1}, and assume that the data respects the time constraint. Using the Table \ref{tab:2}, we can extract the consistent gradual patterns defined in the Table \ref{tab:consistent}
\begin{table}[!h]
    \begin{center}
    \renewcommand{\arraystretch}{1.5}
       \begin{tabular}{|p{3cm}|p{9cm}|}
         \hline
            Using \begin{bf}$\sigma_x$ (\ref{formule}) \end{bf} &
                $(a_{2}^>, a_{3}^>)$;
                $(a_{2}^>, a_{4}^>)$; 
                $(a_{3}^>, a_{4}^>)$;
                $(a_{2}^>, a_{3}^>,a_{4}^>)$\\
            \hline
           Using \begin{bf}$\sigma_x$ (\ref{formule3}) \end{bf} &
                Nothing\\
            \hline
           Using \begin{bf}$\sigma_x$ (\ref{formule2}) \end{bf} &
                $(a_{1}^>, a_{2}^>)$; 
                $(a_{2}^>, a_{3}^>)$\\
            \hline
            %Without &
             %   $(a_{1}^>, a_{2}^>)$;
             %   $(a_{1}^{\geq}, a_{3}^{\geq})$; 
             %   $(a_{1}^{\geq}, a_{4}^{\geq})$;
             %   $(a_{2}^{\geq}, a_{3}^{\geq})$;
             %   $(a_{2}^{\geq}, a_{4}^{\geq})$; 
             %   $(a_{3}^{\geq}, a_{4}^{\geq})$;
             %   $(a_{3}^{\leq}, a_{4}^{\leq})$;
             %   $(a_{1}^{\geq}, a_{2}^{\geq}, a_{3}^{\geq})$
             %   $(a_{2}^{\geq}, a_{3}^{\geq}, a_{4}^{\geq})$
             %   $(a_{1}^{\geq}, a_{2}^{\geq}, a_{3}^{\geq}, a_{4}^{\geq})$\\
            %\hline
            
        \end{tabular}
    \end{center}
    \caption{Consistent gradual patterns extracted from Table \ref{exampledataset1} using Table \ref{tab:2} }\label{tab:consistent}
 \end{table}

\subsection{Algorithm}
 The algorithm \ref{alg:grapgt} presents the different steps of the proposed method. This section explains these steps.
     
 %\end{frame}
\begin{algorithm}[H]
    \caption{GRAPGT}
    \label{alg:grapgt}
    \SetAlgoLined
%    \SetVline
    %\KwResult{Write here the result }
    \KwIn{ \\
        \hspace*{\algorithmicindent}\hspace*{\algorithmicindent}$\Delta$ : a numerical database,\\
        \hspace*{\algorithmicindent}$minSupp$ : a minimum support threshold.}
    \KwOut{\\
        \hspace*{\algorithmicindent}\hspace*{\algorithmicindent}$M$ : frequent gradual patterns.}
    %\begin{algorithmic}
        $F \gets SetThreshold(\Delta)$ \;
       
        $\Delta' \gets Num2Cat(\Delta,F)$ \;
        $M \gets MiningAlgo(\Delta', minSupp)$ \; 
        $Return\ M$;
       
   % \end{algorithmic}
\end{algorithm}

The steps of the algorithm \ref{alg:grapgt} are described as following:
\subsubsection{Initialize the gradual threshold : \textbf{\textit{SetThreshold}}}
It corresponds to the step 1 of Algorithm \ref{alg:grapgt}. Depending on the domain knowledge, the expert can set it using his knowledge. In this work, one of the formulas (\ref{formule}), (\ref{formule3}) and (\ref{formule2}) are applied on to the data. For these formulas, we set the values of the two parameters $k_1$ and $ k_2$ equal to 1 and 0 respectively. These values ($k_1=1$ and $k_2=0$) are considered as their default values.

%The algorithm is organized into five steps :
%\textit{\textbf{\textComputeGradualThreshold}} which allow to compute the gradual threshold, \textit{\textbf{TransformNumToCat}} which allow to transform the numerical database $\Delta$ to categorical database $\Delta'$ by considering the gradual threshold, \textit{\textbf{GetAllItemAndTIDs}} which allow to associate for each item, the set of corresponding TID, \textit{\textbf{PatternAndExtensionSizeOne}} which allow to extract the gradual pattern of size 1 and their corresponding extensions and \textit{\textbf{PatternAndExtensionSizeGreaterOne}} which allow to extract the gradual pattern of size greater than 1 and their corresponding extensions.

\subsubsection{Transformation the numerical database to categorical database  : \textbf{\textit{Num2Cat}}}
For the algorithms GRITE \cite{Di-JorioLT09} and T-GPatterns presented in \cite{lonlac2018approach}, which first transform the numerical database in categorical one, this threshold is applied during the transformation as following:  
%It corresponds to the step 2 of algorithm \ref{alg:grapgt}; it takes two parameters, a numerical database $\Delta$ and the gradual threshold obtained to step 1 of the algorithm \ref{alg:grapgt}, then returns a categorical database $\Dela'$.
\begin{enumerate}
    \item Case of T-GPatterns approach presented in \cite{lonlac2018approach}

The database $\Delta' = \cal{T'} \times$ $\cal{I'}$ ($|T'|=n-1$ and $|I'|=|I|$) resulting from the application of the $Num2Cat$ function on the numerical database $\Delta = \cal{T} \times$ $\cal{I}$,  is calculated as following : 
\begin{enumerate}
     %\item $\forall t' \in T', i' \in I', t \in T, i \in I, j \in [1, n-1], k \in [1, m]$,
    \item $\forall t_j' \in T', t'_j[i_k]= "+" \iff t_{j+1}[i_k] > t_j[i_k] + \sigma_{i_k}$ 
  %\item si $t_{j+1}[i_k] < t_j[i_k] et |t_{j+1}[i_k]-t_j[i_k]| \geq f(i_k)$ alors $t'_j[i_k]="-"$
    \item $\forall t_j' \in T', t'_j[i_k]= "-" \iff t_{j+1}[i_k] < t_j[i_k] - \sigma_{i_k}$
    \item $t'_j[i_k]="o"$ else 
\end{enumerate}
This function allows to generate more symbols "o" than the function \textbf{\textit{Num2Cat}} of the approach \cite{lonlac2018approach}(see Table \ref{tab:3}.a and \ref{tab:3}.b) because graduality is considered if and only if the difference between values of attributes exceed the threshold. Table \ref{tab:3} presents how the numerical database of Table \ref{exampledataset1} is transformed without gradualness threshold application, by application of the formulas (\ref{formule}), formula (\ref{formule2}) and formula (\ref{formule3}) respectively.
\vspace{0.2cm}

\item Case of GRITE algorithm \cite{Di-JorioLT09}. \\
The step "binary matrices generation" of GRITE is modified by the threshold introduction. Let $t_1$ and $t_2$ be two tuples and $x$ an attribute. The binary matrices are calculated as following:
\begin{enumerate}
    \item The matrix $M_1$ of $x^{\geq}$ : $M_{t_1,t_2}=1$ iff $t_2.x-t_1.x\geq \sigma_x$ and 0 otherwise.
     \item The matrix $M_2$ of $x^{\leq}$ : $M_{t_1,t_2}=1$ iff $t_2.x-t_1.x\leq \sigma_x$ else 0.
\end{enumerate}
Like in the first case, the resulting matrices will be less dense than the matrix obtained with the GRITE algorithm. Table \ref{tab:4} shows how the binary matrix is obtained from the numerical database of Table \ref{exampledataset1} respectively without the gradualness threshold on the gradual item $a_1^\geq$, with the gradualness threshold on the gradual item $a_1^\geq$ and the gradual item $a_2^\geq$.
As the computation of the support of a gradual itemset consists in computing the length of the longest path in the graph represented by such binary matrices. We can see from the binary matrices of Table \ref{tab:4} (c) and \ref{tab:4} (d) that introducing gradualness threshold allows to cup the paths in the graph represented by the binary matrices of Table \ref{tab:4} (a) and \ref{tab:4} (b).

It comes out from Table \ref{tab:4} (c) ($M_{a_1^\geq}$ with $\sigma=0.18$) that no path reaches to node $t_2$ and no path begins at node $t_6$. This is not the case of the Table \ref{tab:4} (a) ($M_{a_1^\geq}$ without threshold). Thus, by introducing gradualness threshold, we considerably reduce the search space during the mining process.
The search space is even more reduced between the Table \ref{tab:4} (e) ($M_{(a_1^\geq, a_2^\geq)}$) and Table \ref{tab:4} (f) ($M_{(a_1^\geq, a_2^\geq)}$) with the gradualness threshold $\sigma_{a_1}, \sigma_{a_2}$).
More interesting, the Table \ref{tab:4} (f) can even be reduced by removing object $t_4$ as it is isolated; its line and its column do not have any relation with other objects, it is meaningless in a gradual context. This allows to gain memory, and run-time, as deleted objects are not considered during future joins.
On Table \ref{tab:4} (e), $t_4$ is deleted: all bits from the $t_4$ column and $t_4$ line are set to $0$. Table \ref{tab:4} (g) represents the final matrix.
From Table \ref{tab:4} (g), it is easy to see that the support of the gradual itemset ($a_1^\geq, a_2^\geq$) is equal to $2$ as no path begins at nodes $t_6$, $t_7$, $t_8$ and no path reaches the nodes $t_1$, $t_2$, $t_3$, there is no path containing more than two objects.
From this observation, we can even further reduce Table \ref{tab:4} (g) by removing the lines $t_6$, $t_7$, $t_8$ and the columns $t_1$, $t_2$, $t_3$  to obtain Table \ref{tab:4} (h) with only four lines and three columns.

\end{enumerate}
%\begin{multicols}{2}
\begin{table}[!h]
    \begin{minipage}[t]{2cm}
    \begin{center}
       \begin{tabular}{|c|c|c|c|c|}
         \hline
            TID & $a_1$ & $a_2$ & $a_3$ & $a_4$ \\          
                \hline
                $t'_1$  & +& +& +& +\\
                $t'_2$ &-& -& -& -\\
                $t'_3$ &+& +& +& +\\
                $t'_4$ &-& -& -& -\\
                $t'_5$ &+& +& +& +\\
                $t'_6$ &+& +& -& -\\
                $t'_7$ &-& +& +& +\\
                \hline
        \end{tabular}
        \subcaption{}
    \end{center}
    %\subcaption{Transformed database without threshold \ref{exampledataset1} }\label{transfDB}
    \end{minipage}
    \hspace{1cm}
    \begin{minipage}[t]{2cm}
        \begin{center}
           \begin{tabular}{|c|c|c|c|c|}
             \hline
                TID & $a_1$ & $a_2$ & $a_3$ & $a_4$ \\
                    \hline
                    $t_1'$ &  o &+& o& +\\
                    $t_2'$ &o&o&o& o\\
                    $t_3'$ &o &o& +& o\\
                    $t_4'$ &-& o& -& o\\
                    $t_5'$ &+ &+& + &o\\
                    $t_6'$ &o& o &o &o\\
                    $t_7'$ &-& o &+& +\\
                    \hline
            \end{tabular}
            \subcaption{}
        \end{center}
        %\subcaption{Transformed database \ref{exampledataset1} }\label{transfDBSD}
	\end{minipage}
	\hspace{1cm}
	\begin{minipage}[t]{2cm}
    	\begin{center}
           \begin{tabular}{|c|c|c|c|c|}
             \hline
                TID & $a_1$ & $a_2$ & $a_3$ & $a_4$ \\
                    \hline
                    $t'_1$ & o &+& +& +\\
                    $t'_2$&o& -& -& -\\
                    $t'_3$&o& +& +& o\\
                    $t'_4$&-& -& -& o\\
                    $t'_5$&+& +& +& +\\
                    $t'_6$&+& o& -& o\\
                    $t'_7$&-& +& +& +\\
                    \hline
            \end{tabular}
        \subcaption{}
        \end{center}
        %\subcaption{Transformed database \ref{exampledataset1} }\label{transfDBCV}
	\end{minipage}
	\hspace{1cm}
	\begin{minipage}[t]{2cm}
	  	\begin{center}
           \begin{tabular}{|c|c|c|c|c|}
             \hline
                TID & $a_1$ & $a_2$ & $a_3$ & $a_4$ \\
                    \hline
                    $t'_1$ & + &+& +& +\\
                    $t'_2$&-& -& -& o\\
                    $t'_3$&o& +& +& o\\
                    $t'_4$&-& -& -& o\\
                    $t'_5$&+& +& +& o\\
                    $t'_6$&+& +& -& o\\
                    $t'_7$&-& +& +& +\\
                    \hline
                
            \end{tabular}\\
            \subcaption{}
        \end{center}
        %\subcaption{Transformed database \ref{exampledataset1} }\label{transfDBEC-EC}
    \end{minipage}\\
    \caption{Transformed database obtained from Table \ref{exampledataset1} : (a) without threshold, (b) with threshold (\ref{formule}), (c) with threshold (\ref{formule3}) and (d) with threshold (\ref{formule2}) respectively}\label{tab:3}
\end{table}
%\end{multicols}

\begin{table}[!h]
    \begin{minipage}[t]{3cm}
    \begin{center}
       \begin{tabular}{|c|c|c|c|c|c|c|c|c|}
         \hline
           $\Rsh$  & $t_1$ & $t_2$ & $t_3$ & $t_4$ & $t_5$ & $t_6$ & $t_7$ & $t_8$ \\
                \hline
                $t_1$ & 0 & \textbf{1} & 1 & 1 & 0 & 1 & 1 & 1\\
                $t_2$ & 0 & \textbf{0} & 0 & 0 & 0 & 1 & 1 & 0\\
                $t_3$ & 0 & \textbf{1} & 0 & 1 & 0 & 1 & 1 & 1\\
                $t_4$ & 0 & \textbf{1} & 0 & 0 & 0 & 1 & 1 & 0\\
                $t_5$ & 1 & \textbf{1} & 1 & 1 & 0 & 1 & 1 & 1\\
                $t_6$ & \textbf{0} & \textbf{0} & \textbf{0} & \textbf{0} & \textbf{0} & \textbf{0} & \textbf{1} & \textbf{0} \\
                $t_7$ & 0 & \textbf{0} & 0 & 0 & 0 & 0 & 0 & 0\\
                $t_8$ & 0 & \textbf{1} & 0 & 1 & 0 & 1 & 1 & 0\\
                \hline
        \end{tabular}
        \subcaption{$M_{a_1^\geq}$}
    \end{center}
    %\subcaption{Transformed database without threshold \ref{exampledataset1} }\label{transfDB}
    \end{minipage}
    \hspace{1cm}
        \begin{minipage}[t]{3cm}
    \begin{center}
       \begin{tabular}{|c|c|c|c|c|c|c|c|c|}
         \hline
            $\Rsh$  & $t_1$ & $t_2$ & $t_3$ & $t_4$ & $t_5$ & $t_6$ & $t_7$ & $t_8$ \\
                \hline
                $t_1$ & 0 & 1 & 1 & 1 & 1 & 1 & 1 & 1\\
                $t_2$ & 0 & 0 & 0 & 1 & 0 & 1 & 1 & 1\\
                $t_3$ & 0 & 1 & 0 & 1 & 1 & 1 & 1 & 1\\
                $t_4$ & 0 & 0 & 0 & 0 & 0 & 1 & 1 & 1\\
                $t_5$ & 0 & 1 & 0 & 1 & 0 & 1 & 1 & 1\\
                $t_6$ & 0 & 0 & 0 & 0 & 0 & 0 & 1 & 1\\
                $t_7$ & 0 & 0 & 0 & 0 & 0 & 0 & 0 & 1\\
                $t_8$ & 0 & 0 & 0 & 0 & 0 & 0 & 0 & 0\\
                \hline
        \end{tabular}
        \subcaption{$M_{a_2^\geq}$}
    \end{center}
    %\subcaption{Transformed database without threshold \ref{exampledataset1} }\label{transfDB}
    \end{minipage}
    \hspace{1cm}
 \begin{minipage}[t]{3cm}
    \begin{center}
       \begin{tabular}{|c|c|c|c|c|c|c|c|c|}
         \hline
            $\Rsh$  & $t_1$ & $t_2$ & $t_3$ & $t_4$ & $t_5$ & $t_6$ & $t_7$ & $t_8$ \\
                \hline
                $t_1$ & 0 & 1 & 1 & 1 & 0 & 1 & 1 & 1\\
                $t_2$ & 0 & \textbf{0} & 0 & 0 & 0 & 1 & 1 & 0\\
                $t_3$ & 0 & 1 & 0 & 1 & 0 & 1 & 1 & 1\\
                $t_4$ & 0 & 0 & 0 & 0 & 0 & 1 & 1 & 0\\
                $t_5$ & 0 & 1 & 0 & 1 & 0 & 1 & 1 & 1\\
                $t_6$ & 0 & 0 & 0 & 0 & 0 & 0 & 1 & 0 \\
                $t_7$ & \textbf{0} & \textbf{0} & \textbf{0} & \textbf{0} & \textbf{0} & \textbf{0} & \textbf{0} & \textbf{0} \\
                $t_8$ & \textbf{0} & \textbf{0} & \textbf{0} & \textbf{0} & \textbf{0} & \textbf{0} & \textbf{0} & \textbf{0} \\
                \hline
        \end{tabular}
        \subcaption{$M_{(a_1^\geq, a_2^\geq)}$}
    \end{center}
    %\subcaption{Transformed database without threshold \ref{exampledataset1} }\label{transfDB}
    \end{minipage}
    \hspace{1cm}
    \begin{minipage}[t]{3cm}
        \begin{center}
           \begin{tabular}{|c|c|c|c|c|c|c|c|c|}
             \hline
                $\Rsh$  & $t_1$ & \textbf{$t_2$} & $t_3$ & $t_4$ & $t_5$ & $t_6$ & $t_7$ & $t_8$ \\
                \hline
                $t_1$ & 0 & \textbf{0} & 0 & 0 & 0 & 1 & 1 & 0\\
                $t_2$ & 0 & \textbf{0} & 0 & 0 & 0 & 0 & 1 & 0\\
                $t_3$ & 0 & \textbf{0} & 0 & 0 & 0 & 1 & 1 & 0\\
                $t_4$ & 0 & \textbf{0} & 0 & 0 & 0 & 1 & 1 & 0\\
                $t_5$ & 0 & \textbf{0} & 1 & 1 & 0 & 1 & 1 & 1\\
                $t_6$ & \textbf{0} & \textbf{0} & \textbf{0} & \textbf{0} & \textbf{0} & \textbf{0} & \textbf{0} & \textbf{0} \\
                $t_7$ & 0 & \textbf{0} & 0 & 0 & 0 & 0 & 0 & 0\\
                $t_8$ & 0 & \textbf{0} & 0 & 0 & 0 & 1 & 1 & 0\\
                    \hline
            \end{tabular}
            \subcaption{$M_{a_1^\geq}$ with $\sigma=0.18$}
        \end{center}
        %\subcaption{Transformed database \ref{exampledataset1} }\label{transfDBSD}
	\end{minipage}
	%\hspace{1cm}
	\vspace{1cm}
	\begin{minipage}[t]{3cm}
    	\begin{center}
         \begin{tabular}{|c|c|c|c|c|c|c|c|c|}
             \hline
                $\Rsh$  & $t_1$ & $t_2$ & $t_3$ & $t_4$ & $t_5$ & $t_6$ & $t_7$ & $t_8$ \\
                \hline
                $t_1$ & 0 & 1 & 0 & 1 & 0 & 1 & 1 & 1\\
                $t_2$ & 0 & 0 & 0 & 0 & 0 & 0 & 1 & 1\\
                $t_3$ & 0 & 0 & 0 & 1 & 0 & 1 & 1 & 1\\
                $t_4$ & 0 & 0 & 0 & 0 & 0 & 0 & 0 & 1\\
                $t_5$ & 0 & 0 & 0 & 0 & 0 & 1 & 1 & 1\\
                $t_6$ & \textbf{0} & \textbf{0} & \textbf{0} & \textbf{0} & \textbf{0} & \textbf{0} & \textbf{0} & \textbf{0} \\
                $t_7$ & \textbf{0} & \textbf{0} & \textbf{0} & \textbf{0} & \textbf{0} & \textbf{0} & \textbf{0} & \textbf{0} \\
                $t_8$ & \textbf{0} & \textbf{0} & \textbf{0} & \textbf{0} & \textbf{0} & \textbf{0} & \textbf{0} & \textbf{0} \\
                \hline
            \end{tabular}
         \subcaption{$M_{a_2^\geq}$ with $\sigma=0.59$}
        \end{center}
        %\subcaption{Transformed database \ref{exampledataset1} }\label{transfDBCV}
	\end{minipage}
	%\hspace{1cm}
        \begin{minipage}[t]{3cm}
        \begin{center}
           \begin{tabular}{|c|c|c|c|c|c|c|c|c|}
             \hline
                $\Rsh$  & $t_1$ & \textbf{$t_2$} & $t_3$ & $t_4$ & $t_5$ & $t_6$ & $t_7$ & $t_8$ \\
                \hline
                $t_1$ & 0 & 0 & 0 & \textbf{0} & 0 & 1 & 1 & 0\\
                $t_2$ & 0 & 0 & 0 & \textbf{0} & 0 & 0 & 1 & 0\\
                $t_3$ & 0 & 0 & 0 & \textbf{0} & 0 & 1 & 1 & 0\\
                $t_4$ & \textbf{0} & \textbf{0} & \textbf{0} & \textbf{0} & \textbf{0} & \textbf{0} & \textbf{0} & \textbf{0} \\
                $t_5$ & 0 & 0 & 0 & \textbf{0} & 0 & 1 & 1 & 1\\
                $t_6$ & \textbf{0} & \textbf{0} & \textbf{0} & \textbf{0} & \textbf{0} & \textbf{0} & \textbf{0} & \textbf{0} \\
                $t_7$ & \textbf{0} & \textbf{0} & \textbf{0} & \textbf{0} & \textbf{0} & \textbf{0} & \textbf{0} & \textbf{0} \\
                $t_8$ & \textbf{0} & \textbf{0} & \textbf{0} & \textbf{0} & \textbf{0} & \textbf{0} & \textbf{0} & \textbf{0} \\
                    \hline
            \end{tabular}
            \subcaption{$M_{(a_1^\geq, a_2^\geq)}$ with $\sigma_{a_1}, \sigma_{a_2} $}
        \end{center}
        %\subcaption{Transformed database \ref{exampledataset1} }\label{transfDBSD}
	\end{minipage}
	   %\hspace{1cm}
        \begin{minipage}[t]{3cm}
        \begin{center}
           \begin{tabular}{|c|c|c|c|c|c|c|c|}
             \hline
                $\Rsh$  & $t_1$ & \textbf{$t_2$} & $t_3$ & $t_5$ & $t_6$ & $t_7$ & $t_8$ \\
                \hline
                $t_1$ & 0 & 0 & 0 &  0 & 1 & 1 & 0\\
                $t_2$ & 0 & 0 & 0 &  0 & 0 & 1 & 0\\
                $t_3$ & 0 & 0 & 0 & 0 & 1 & 1 & 0\\
                $t_5$ & 0 & 0 & 0 & 0 & 1 & 1 & 1\\
                $t_6$ & \textbf{0} & \textbf{0} & \textbf{0} & \textbf{0} & \textbf{0} & \textbf{0} & \textbf{0} \\
                $t_7$ & \textbf{0} & \textbf{0} & \textbf{0} &  \textbf{0} & \textbf{0} & \textbf{0} & \textbf{0} \\
                $t_8$ & \textbf{0} & \textbf{0} & \textbf{0} &  \textbf{0} & \textbf{0} & \textbf{0} & \textbf{0} \\
                \hline
            \end{tabular}
             \subcaption{reduced $M_{(a_1^\geq, a_2^\geq)}$ with $\sigma_{a_1}, \sigma_{a_2} $}
        \end{center}
        %\subcaption{Transformed database \ref{exampledataset1} }\label{transfDBSD}
	\end{minipage}
	\hspace{1cm}
	  \begin{minipage}[t]{3cm}
        \begin{center}
           \begin{tabular}{|c|c|c|c|}
             \hline
                $\Rsh$  & $t_6$ & $t_7$ & $t_8$ \\
                \hline
                $t_1$ & 1 & 1 & 0\\
                $t_2$ &  0 & 1 & 0\\
                $t_3$ & 1 & 1 & 0\\
                $t_5$ & 1 & 1 & 1\\
                \hline
            \end{tabular}
            \subcaption{reduced $M_{(a_1^\geq, a_2^\geq)}$ with $\sigma_{a_1}, \sigma_{a_2} $}
        \end{center}
        %\subcaption{Transformed database \ref{exampledataset1} }\label{transfDBSD}
	\end{minipage}
    \caption{Binary matrix of $a_1^\geq$ and $a_2^\geq$ (respectively ($a_1^\geq, a_2^\geq$))  without threshold, and those of $a_1^\geq$ and $a_2^\geq$ (respectively ($a_1^\geq, a_2^\geq$)) with the threshold defined by equation (\ref{formule}) from Table \ref{exampledataset1}.}\label{tab:4}
\end{table}
%$\sigma$

\subsubsection{Gradual itemsets mining : \textbf{\textit{MiningAlgo}}}
For T-GPatterns \cite{lonlac2018approach}, the remainded step is the procedure $searchCoevolution(Apriori(\Delta', minSupp))$.  In our proposal, this step is done exactly as proposed by the authors.
Introducing the threshold constraints in GRITE or Paraminer algorithm consists of changing the processing step where the binary matrix associated to each attribute is calculated. All other steps (\textit{initialize, AND operator and DeleteAloneTuples}) are not changed.
\subsection{GRAPGT algorithm Proprieties}

\textbf{Correctness}.
As the GRAPGT algorithm is based on the existing algorithms which is prove to be correct, the threshold introduction step added permits to make more sparse the the transformed matrix, input of the frequent itemsets mining algorithm (Apriori-like). As these algorithms are prove to be correct, GRAPGT is correct too.

\textbf{Completeness}.
As the GRAPGT algorithm is based on the prove complet algorithm, the introduction threshold step does not change the this property for the obtained algorithm. The main consequence is the reduction of the gradual frequent itemsets due to the threshold. 

\textbf{Complexity}. If the threshold of the different attributes are set by the user, the theoretical complexity remains that of the chose algorithm (GRITE, T-GPatterns,...). But if these thresholds are calculated by the formulas (\ref{formule}), (\ref{formule3}), or (\ref{formule2}) , the time complexity is impacted by $n\times m$ where $n$ is the row number and $m$ the columns number. But as these algorithms are Apriori-based, this theoretical complexity remains exponential ($2^m$), the same obtained when $k_1=k_2=0$.

\subsection{Impact of the threshold}
After application of the gradualness threshold on the data, some aspects of the expected results are impacted.
\subsubsection{On the resulted gradual itemsets}
\begin{proposition}
Let $\Delta$ be a numerical database, $G_1$ a set of frequent gradual itemsets obtained using the algorithm T-GPatterns or GRITE, and $G_2$ a set of gradual itemsets obtained by our approach it is true that $G_2\subseteq G_1$
\end{proposition}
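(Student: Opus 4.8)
The plan is to show that introducing a non-negative gradual threshold only \emph{tightens} the constraint defining valid extensions, so that the support of any gradual itemset computed by GRAPGT never exceeds the support computed by the underlying algorithm. Frequency with respect to a fixed $minSupp$ is then inherited in one direction, which yields the inclusion $G_2 \subseteq G_1$. First I would record the elementary fact that every gradual threshold is non-negative: with the default parameters $k_1 = 1$ and $k_2 = 0$, each of the formulas (\ref{formule}), (\ref{formule3}), (\ref{formule2}) reduces to a standard deviation or a coefficient of variation of attribute values, all of which are $\geq 0$; more generally $\sigma_x \geq 0$ whenever $k_1, k_2 \geq 0$. Note also that $k_1 = k_2 = 0$ gives $\sigma_x = 0$ and recovers exactly T-GPatterns / GRITE, so $G_1$ is precisely the instance of the framework with all thresholds set to zero.

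Next I would prove a monotonicity statement at the level of a single gradual item: for any two objects $t, t'$ and any attribute $i_p$, if the pair satisfies the thresholded increase (resp. decrease) constraint $t'.i_p - t.i_p \geq \sigma_{i_p}$ (resp. $t'.i_p - t.i_p \leq -\sigma_{i_p}$), then, since $\sigma_{i_p} \geq 0$, it also satisfies the unconstrained constraint $t'.i_p > t.i_p$ (resp. $t'.i_p < t.i_p$). In matrix terms this says the thresholded binary matrix of $i_p^{*}$ is entrywise dominated by the matrix built with $\sigma = 0$, which is exactly what Table \ref{tab:4} illustrates. Aggregating over the $k$ items of $G$ through the AND operator used by both algorithms preserves this domination, so the thresholded matrix of $G$ is entrywise dominated by the unthresholded one.

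From this I would deduce the support comparison. By Definition \ref{def:ext}, every extension $s$ of $G$ that is valid in the thresholded database is also a valid extension of $G$ in the original database; hence the set $\Delta_G$ of longest respecting sequences can only shrink, and equivalently the longest path in the graph encoded by the dominated matrix is no longer than the longest path in the dominating one. Therefore $support_{\mathrm{GRAPGT}}(G, \Delta) \leq support_{\mathrm{T\text{-}GPatterns/GRITE}}(G, \Delta)$ for every gradual itemset $G$. Finally, if $G \in G_2$ then $support_{\mathrm{GRAPGT}}(G, \Delta) \geq minSupp$, so a fortiori $support_{\mathrm{T\text{-}GPatterns/GRITE}}(G, \Delta) \geq minSupp$, i.e. $G \in G_1$; this establishes $G_2 \subseteq G_1$.

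The step I expect to be the main obstacle is the middle one: making the principle \emph{``the longest path is monotone under entrywise domination of adjacency matrices''} fully rigorous and uniform across the two support semantics involved (the categorical \emph{Num2Cat} encoding used by T-GPatterns and the precedence-graph encoding used by GRITE), together with a careful treatment of the strict-versus-non-strict boundary when some $\sigma_x = 0$. The non-negativity of $\sigma_x$ for arbitrary sign choices of $k_1, k_2$ is a minor but essential caveat to state explicitly, since a negative threshold would \emph{loosen} rather than tighten the constraint and could invalidate the inclusion.
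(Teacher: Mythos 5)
Your proposal is correct and follows essentially the same route as the paper's own proof: the paper likewise argues that the thresholded intermediate binary matrix $M'$ is entrywise dominated by the unthresholded matrix $M$ (every $1$ in $M'$ is a $1$ in $M$ since $M$ corresponds to $\sigma=0$), so every itemset frequent under the threshold is frequent without it, giving $G_2\subseteq G_1$. Your version is simply more careful, making explicit the non-negativity of $\sigma_x$ and the monotonicity of the longest-path support under matrix domination, both of which the paper leaves implicit.
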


\begin{proof}
The main effect of the gradualness threshold to introduce more "o" character in the intermediate transformed matrix. Let $M$ (resp. $M'$) be a intermediate binary obtained by T-GPatterns without gradualness threshold (resp. with gradualness threshold), $\forall_{ij}$ if $M'_{i,j}=1$ then $M_{ij}=1$ because $M$ is calculated with $\sigma=0$. With this observation and due to the fact that $M$ is more dense than $M'$, all frequent extracted from $M'$ is also frequent in $M$. This implies that $G_2\subseteq G_1$.
\end{proof}

\subsubsection{Relationship between closed frequent gradual itemsets}
\begin{proposition}
Let $\Delta$ be a numerical database and $x$ an attribute of $\Delta$ such that there exists a row $i$ and $M_{i,x}=1$ and $M'_{i,x}=0$; $M$ (resp. $M'$) is the intermediate binary matrix by the transformation of $\Delta$ without (resp. with) gradualness threshold application.

If $X$is a closed gradual itemset in $M$ such that $x\in X$ then $X-\{x\}$ is closed gradual itemset in $M'$
\end{proposition}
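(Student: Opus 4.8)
The plan is to work directly with the closure operator $h = f\circ g$ introduced in Section~\ref{closedGradual}, instantiated once for the dense matrix $M$ and once for the thresholded matrix $M'$. Write $f_M, g_M, h_M = f_M\circ g_M$ and $f_{M'}, g_{M'}, h_{M'} = f_{M'}\circ g_{M'}$ for the two versions, so that "$X$ closed in $M$" means $h_M(X)=X$ and the goal is to prove $h_{M'}(X\setminus\{x\})=X\setminus\{x\}$. First I would record the single structural fact linking the two matrices: by the reasoning of the previous proposition, $M'_{p,q}=1 \Rightarrow M_{p,q}=1$ for every entry, so every object sequence that is an extension under $M'$ is also an extension under $M$. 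Consequently $g_{M'}$ returns sequences that remain valid for $g_M$, and any gradual item respected by an $M'$-extension is respected by the corresponding behaviour under $M$.

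Second, the inclusion $X\setminus\{x\}\subseteq h_{M'}(X\setminus\{x\})$ is immediate from extensivity of the closure operator. Hence the whole content lies in the reverse inclusion $h_{M'}(X\setminus\{x\})\subseteq X\setminus\{x\}$, i.e.\ I must show that the $M'$-closure adds no gradual item to $X\setminus\{x\}$. I would split the candidate items $y$ that could be added into the two cases $y\ne x$ and $y=x$.

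In the case $y\ne x$, suppose $y\in h_{M'}(X\setminus\{x\})$, meaning $y$ is respected by every sequence of $g_{M'}(X\setminus\{x\})$. Using the monotonicity $M'\le M$ above together with $h_M(X)=X$, I would transport this to show that $y$ is already respected by the $M$-extension of $X$, so $y$ would lie in $h_M(X)=X$; since $y\notin X\setminus\{x\}$ and $y\ne x$, this is a contradiction. The hypothesis $x\in X$ is used precisely to align the extension of $X\setminus\{x\}$ in $M'$ with that of $X$ in $M$. In the case $y=x$, the hypothesis $M_{i,x}=1$ and $M'_{i,x}=0$ becomes essential: because the entry for $x$ at the distinguishing transition $i$ has been zeroed by the threshold, $x$ is \emph{not} respected by the extension produced by $g_{M'}(X\setminus\{x\})$, so $x\notin h_{M'}(X\setminus\{x\})$. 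Combining the two cases with the easy inclusion yields $h_{M'}(X\setminus\{x\})=X\setminus\{x\}$.

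I expect the $y=x$ case to be the main obstacle. The subtlety is that the closure is computed from \emph{maximal} sequences (longest paths), so one must verify that the distinguishing row $i$ genuinely lies on, or genuinely breaks, the extension selected by $g_{M'}(X\setminus\{x\})$; otherwise the lost entry $M'_{i,x}=0$ could be invisible to the longest-path computation and fail to prevent $x$ from re-entering the closure. Making this step rigorous — rather than assuming that the zeroed entry is effective — is where the argument has to do real work, and it is also the place where the precise definitions of $g$ and of the support as a longest ordered list must be invoked carefully.
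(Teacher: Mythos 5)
Your overall strategy (reduce everything to the reverse inclusion $h_{M'}(X\setminus\{x\})\subseteq X\setminus\{x\}$ and split candidate items into $y\ne x$ and $y=x$) is a reasonable formalisation, and you have correctly located the crux; but both halves of your argument have gaps that the stated hypotheses do not let you close. For $y\ne x$, the ``transport'' step fails because $g_{M'}(X\setminus\{x\})$ and $g_M(X)$ are not comparable: deleting the item $x$ enlarges the extension while thresholding the matrix shrinks it, and these effects do not cancel. A sequence counted in $g_{M'}(X\setminus\{x\})$ need not survive in $g_M(X)$ and vice versa, so an item $y$ respected everywhere on the former need not be respected everywhere on the latter, and you cannot conclude $y\in h_M(X)=X$; the claim that ``$x\in X$ aligns the two extensions'' is not justified. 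For $y=x$, as you yourself observe, you need the distinguishing row $i$ (with $M_{i,x}=1$, $M'_{i,x}=0$) to actually lie in the extension $g_{M'}(X\setminus\{x\})$, and nothing in the hypotheses guarantees this: $i$ may fail to respect some item of $X\setminus\{x\}$ under $M'$, in which case the zeroed entry is invisible and $x$ re-enters the closure. Schematically, with two rows and columns $x,y$, take $M=\bigl(\begin{smallmatrix}1&1\\1&0\end{smallmatrix}\bigr)$, so $X=\{x,y\}$ is closed with extension $\{r_1\}$, and let the threshold zero only the entry $M_{2,x}$; then $g_{M'}(\{y\})=\{r_1\}$, $f_{M'}(\{r_1\})=\{x,y\}\ne\{y\}$, and $X\setminus\{x\}$ is not closed even though every hypothesis of the proposition holds. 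So the difficulty you flag at the end is not a technicality to be tidied up later; it is fatal to the statement as written.

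For comparison, the paper's own proof is a one-line geometric argument: it identifies the closed set $X$ with a maximal rectangle of $1$'s $(g(X),X)$, says the threshold ``deletes the column $x$'' from this rectangle, and asserts that the remaining rectangle $(g(X\setminus\{x\}),X\setminus\{x\})$ is maximal. That assertion is exactly the point you isolated (maximality in the column direction is what must rule out $x$, or any other item, re-entering the closure), and it also tacitly assumes that the threshold affects only column $x$ and only rows inside $g(X)$, neither of which follows from the hypotheses since every attribute receives its own threshold $\sigma$. Your instinct about where the work lies is therefore sound, but neither your sketch nor the paper's argument discharges it; to become provable the proposition needs strengthened hypotheses, e.g.\ that the distinguishing row belongs to $g(X)$ and that entries outside column $x$ are unchanged, under which your two-case plan (and the paper's rectangle argument) does go through.
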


\begin{proof}
From the operation of the closure defined in \cite{AyouniLYP10}, gradual pattern $X$ and its extension forms a maximal rectangle in $M$ (rectangle of 1 value). So the 'o' character put at the column $x$ in the matrix $M'$ (due to the threshold application) consist in deleting a column $x$ in the maximal rectangle $(g(X),X)$; the remained rectangle $(g(X-\{x\}),X-\{x\})$ is maximal, then $X-\{x\}$ is closed.
\end{proof}
This proposition is also true if $x\in X$ is replaced by $X'\subset X$.
\subsubsection{On the support of extracted gradual itemsets}

It is clear that after applying the graduality threshold, the support of some gradual itemsets decreases. So during exploration, this fact should be taken into account.

%\subsubsection{Relationship between the minimum support threshold and the gradualness threshold}
\begin{property}
\label{pro:relationshipBetweenSupportThreshold}
Let $\Delta$ be a numerical database and $x$ an attribute of $\Delta$. Let $minSupp$ a minimum support threshold and $\sigma$ a gradualness threshold set by a user. For the algorithm T-GPatterns, the following holds: \\ if $\sum_{i=0}^n |t_{i+1}.x-t_{i}.x| < \sigma \times minSupp$ then $x^* ~(* \in \{\leq, \geq\})$ as well as all of its supersets are not frequent in $\Delta$.
\end{property}

\begin{proof}
Assume $\sum_{i=0}^n |t_{i+1}.x-t_{i}.x| < \sigma \times minSupp$ , $x^* ~(* \in \{\leq, \geq\})$. Suppose that $x^*$ is a frequent gradual item, according to the gradual itemset definition proposed in \cite{lonlac2018approach}, this means that there is a list of sequence of consecutive objects $s = \langle s_1, \ldots, s_k \rangle $ such that $|t_{i+1}.x - t_i.x| \geq \sigma$, for $t_i, t_{i+1} \in s_j$ $(1 \leq j \leq k)$ and $\sum_{j=1}^k |s_j| \geq minSupp$.
So $\sum_{j=1}^k \sum_{i=1}^{|s_j|-1} |t_{i+1}.x-t_{i}.x| \geq \sigma \times minSupp$.
%$k \geq minSupp$.
So, $\sum_{i=0}^n |t_{i+1}.x-t_{i}.x| \geq \sigma \times minSupp$. which contradicts the initial hypothesis.
\end{proof}

Property \ref{pro:relationshipBetweenSupportThreshold} allows to perform dataset reduction before  mining process. Hence the whole dataset is not required to compute candidate patterns for a given minimum support and gradualness threshold. The search space containing $x$ and all its supersets can be discarded.

\section{Experiments}
\label{sec:expe}
We present in this section an experimental study of the execution time and number of gradual patterns extracted using \textit{GRAPGT}.
We also evaluate the performance in terms of memory usage.
It should be recalled here that the issue of the management of the quantity of mined patterns is a great challenge as in many practical applications, the number of patterns mined can prove to be intractable for user-defined frequency threshold. All the experiments are conduced on a computer with 8 GB of RAM, of processor Intel(R) Core(TM) i5-8250U. We compare first of all our R implementation of \textit{GRAPGT}  with the original R implementation of T-Gpatterns \cite{lonlac2018approach}, and secondly the C++ implementation of \textit{GRAPGT} with the original c++ implementation of ParaMiner \cite{NegrevergneTRM14}.
\subsection{Source code}
The source code of our proposed algorithm \textit{GRAPGT} (respectively \textit{GRAPGT-ParaMiner}) can be obtained from \url{https://github.com/chirmike/GRAPGT} (respectively \url{https://github.com/Chirmeni/GRAPGT-Paraminer}).

\subsection{Data Description}
Table \ref{dataset} presents the characteristics of the datasets used in the experiments for evaluating the performance of our proposed algorithm
%data used in the experimentation of this new approach.

 \begin{table}[!h]
    \begin{center}
       \begin{tabular}{|c|c|c|c|c|}
         \hline
             Dataset & \#objects &\#attributes & Domain &Origin \\
           
                \hline
               Paleo & 111 &87& Paleo-ecology& \cite{lonlac2018approach}\\
               Cancer& 410& 30& Medical&\cite{UCI}\\
               Air quality&9358& 13& ecology& \cite{UCI}\\
               paraMiner-data &109& 4413& Synthetic&\cite{NegrevergneTRM14} \\
                \hline
        \end{tabular}
    \end{center}
    \caption{Experimentation datasets }\label{dataset}
 \end{table}
The datasets described in Table \ref{dataset} are numerical databases: the first numerical database used is a temporal database (database with a temporal order among objects (or rows)) of paleoecological indicators \cite{lonlac2018approach} from Lake Aydat located in the southern part of the Chane des Puys volcanic chain (Massif Central in France); this database contains $111$ objects corresponding to different dates identified on the lake record considered, and $87$ attributes corresponding to different indicators of paleoecological anthropization (pollen grains) (cf. \cite{lonlac2018approach} for more details). 

The second numerical database is a temporal database obtained from \texttt{UCI Machine Learning repository} \cite{UCI} describing the hourly averaged responses from an array of $5$ metal oxide chemical sensors embedded in an Air Quality Chemical Multisensor Device; this database contain $9358$ instances corresponding to the hourly averaged responses and $13$ attributes corresponding to the ground truth hourly averaged concentrations for CO, Non Metanic Hydrocarbons, Benzene, Total Nitrogen Oxides (NOx) and Nitrogen Dioxide (NO2) and any more. 

The third database is a non temporal Cancer database also taken from \texttt{UCI Machine Learning repository} describing characteristics of the cell nuclei computed from a digitized image of a fine needle aspirate (FNA) of a breast mass; this database used for experiments in Paraminer \cite{NegrevergneTRM14}  containing $410$ objects described by $30$ attributes. The fourth numerical database is a synthetic non-temporal database from Paraminer's database containing $109$ objects and $4413$ attributes. 

\subsection{Results}
\label{subsec:res}
The results of the experiment study on the four databases demonstrate the importance of incorporating the gradual threshold into the data mining process and the significant benefits it provides. This taking into account of gradual threshold in the mining process not only allows users to provide consistent gradual patterns (see Definition \ref{def:consistence}) extracts imprecise numerical databases, but also to reduce on average by $86.06\%$ the quantity of frequent gradual patterns to be analyzed, by $84.21\%$ the extraction time of these frequent gradual pattern and finally, by $50.01\%$ the size of the memory consumed. Furthermore, the gradual threshold removes noisy / inconsistent gradual patterns.

We present these results in two steps : first of all, we present results obtained on the non-temporal databases (Fig. \ref{fig:cancer} and \ref{fig:Para}), and secondly, we present results obtained on temporal databases (Fig. \ref{fig:paleo} and \ref{fig:air}). We varied the support threshold in the interval $[0.1, 0.5]$ with a step of $0.1$. Throughout our experiments, we have set $k_1$ to $1$ and $k_2$ to $0$.\\

\subsubsection{Comparative experiments : non-temporal databases.}
The first experiment compares the execution time, the memory usage and the number of frequent gradual patterns for GRAPGT and ParaMiner on non temporal databases Cancer and ParaMiner-Data (cf Table \ref{dataset}).  
\begin{figure}[!h]
    \begin{minipage}[t]{3cm}
		\centering
		\includegraphics[scale=0.18]{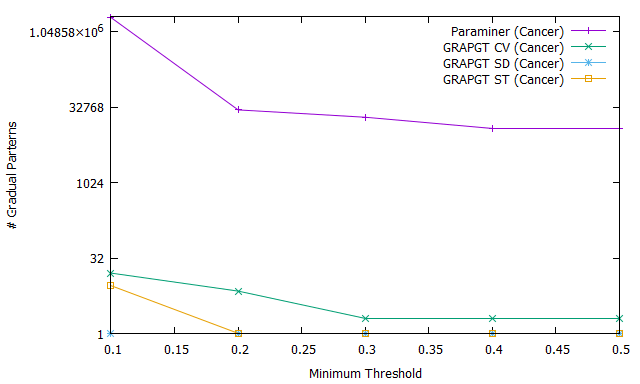}
		\subcaption{}
	\end{minipage}
	\hspace{1cm}
	\begin{minipage}[t]{3cm}
		\centering
		\includegraphics[scale=0.18]{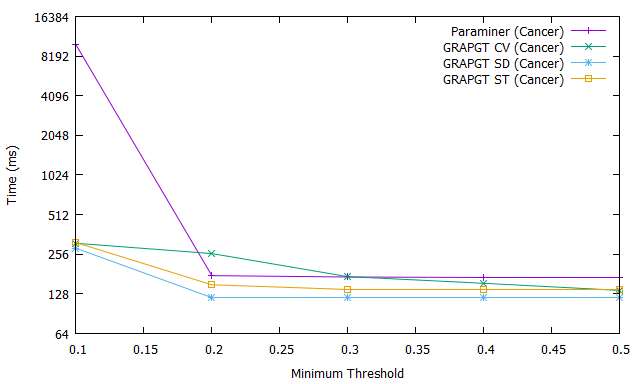}
		\subcaption{} 
  \end{minipage}
  \hspace{1cm}
      \begin{minipage}[t]{3cm}
		\centering
		\includegraphics[scale=0.18]{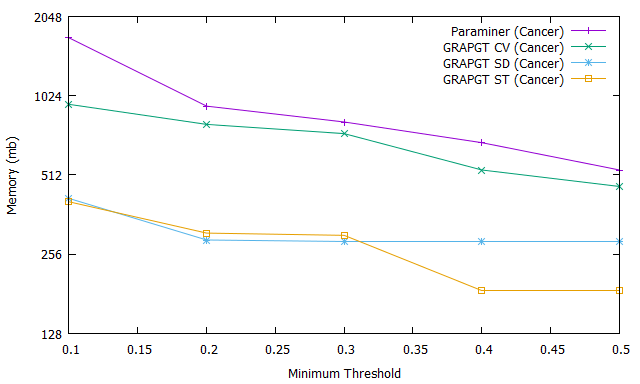}
		\subcaption{}
	\end{minipage}
	
  \caption{Comparative study in number of frequent gradual patterns (a), in time (b) and in memory usage (c) of four algorithms on the cancer database.}\label{fig:cancer}
\end{figure}
%\begin{figure}[!h]
%    	\centering
%		\includegraphics[scale=0.3]{courbes-especes/cancer_nombre.png}
%		\subcaption(a)
%  \caption{Comparative study in time (b) and in number (a) of gradual patterns extracted from the cancer database.}\label{fig:cancer}
%\end{figure}
%\begin{figure}[!h]
%    	\centering
%		\includegraphics[scale=0.5]{courbes-especes/cancer_time.png}
%  \caption{Comparative study in time (b) and in number (a) of gradual patterns extracted from the cancer database.}\label{fig:cancer}
%\end{figure}
%\begin{figure}[!h]
%    	\centering
%		\includegraphics[scale=0.5]{courbes-especes/cancer_memory.png}
%  \caption{Comparative study in time (b) and in number (a) of gradual patterns extracted from the cancer database.}\label{fig:cancer}
%\end{figure}
Fig. \ref{fig:cancer}.a shows the number of frequent gradual patterns for ParaMiner algorithm (purple curve), \textit{GRAPGT} CV algorithm (green curve), \textit{GRAPGT} SD algorithm (blue curve), and \textit{GRAPGT} ST algorithm (orange curve) on the cancer database when varying the support, with a reduced number of attributes. Figure \ref{fig:cancer}.b shows the runtime of all these algorithms on the same cancer database when varying support, while Figure \ref{fig:cancer}.c, shows the memory usage. 

It should be like to remind you that \textit{GRAPGT} CV, SD and ST are modified versions of the ParaMiner algorithm to which we have integrated different gradual thresholds (coefficient of variation, standard deviation and standard deviation of the deviations).   
The number of frequents gradual patterns results show that \textit{GRAPGT} considerably reduces this number than ParaMiner when database have a small attributes : for a value of support threshold equal to $0.2$ it extracts $7$ frequent gradual patterns when the gradual threshold is defined by equation or formule (\ref{formule3}), while ParaMiner extracts $28761$.  This is a considerable advantage that should not be overlooked for the end user (the expert), because it is easier for an expert to analyze $7$ frequent  gradual patterns at  the expense of $28761$; in addition, frequent gradual patterns extracted with our approach have an additional information : they are gradual patterns with a strong graduality power. The execution time results show that \textit{GRAPGT} is faster that Paraminer for handling this database with small attributes when the support threshold increases : for a value of support threshold equal to $0.1$ it answers in $330$ ms when coefficient of variation considered like gradual threshold, while Paraminer needs $3$ hours and $16$ minutes.
%; for presentation reasons, we have chosen not to include it in the grading scale.This is why the Paraminer curve is half present in the Fig. \ref{fig:cancer}.
\textit{GRAPGT} remains better in memory used compared to ParaMiner. Most the support is great, less execution time, number and memory are great.

\begin{figure}[!h]
    \begin{minipage}[t]{3cm}
		\centering
		\includegraphics[scale=0.18]{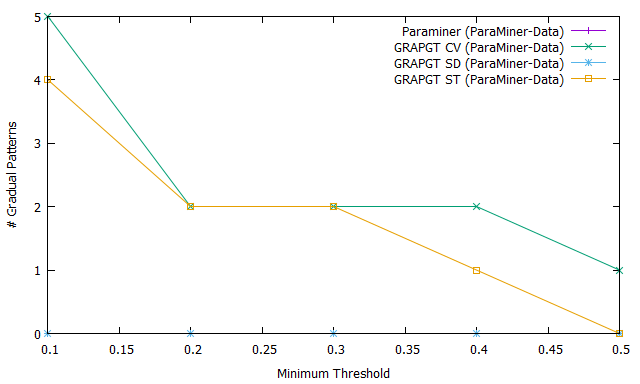}
		\subcaption{}
	\end{minipage}
	\hspace{1cm}
	\begin{minipage}[t]{3cm}
		\centering
		\includegraphics[scale=0.18]{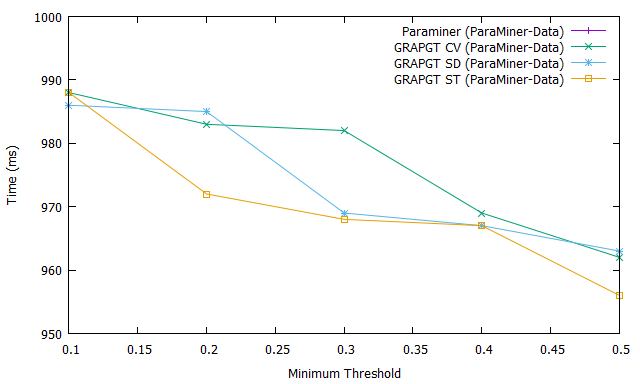}
		\subcaption{} 
  \end{minipage}
  \hspace{1cm}
	\begin{minipage}[t]{3cm}
		\centering
		\includegraphics[scale=0.18]{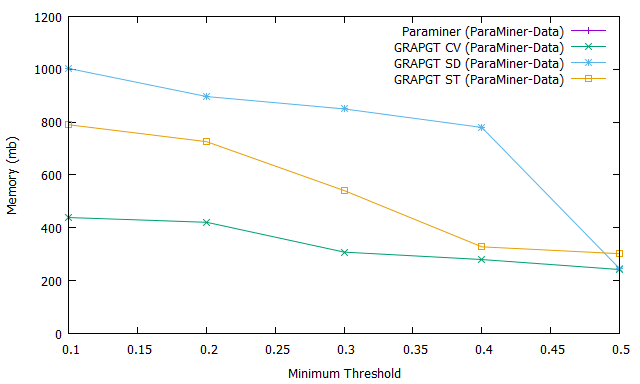}
		\subcaption{} 
  \end{minipage}
	\caption{Comparative study in number of frequent gradual patterns (a), in time (b) and in memory usage (c) of four algorithms on the synthetic database}\label{fig:Para}
\end{figure}

Fig. \ref{fig:Para} shows the number of frequent gradual patterns extracted (Fig. \ref{fig:Para}.a), the execution time (Fig. \ref{fig:Para}.b) and the memory usage (Fig.  \ref{fig:Para}.c) for previous four algorithms when varying the support on numerical database ParaMiner-Data. We can see that figures Fig. \ref{fig:Para}.a, \ref{fig:Para}.b and \ref{fig:Para}.c each have $3$ curves instead of $4$. This observation is due to the fact that the ParaMiner algorithm without gradual threshold does not run until the end (it crashes memory) on the ParaMiner-Data database containing $109$ transactions and $4413$ attributes. This database is more dense than the previous one, as the complexity lies in the number of attributes which determines the number of frequent gradual patterns (problem of combinatorial explosion).

However, when we introduce the graduality threshold, we manage to execute and extract a reduced number of frequent gradual patterns (e.g. $5$ when the coefficient of variation is considered as gradual threshold for a support equal to $0.1$). This result shows that our \textit{GRAPGT} approach is scalable compared to ParaMiner because it gives a result where Paraminer fails. It goes without saying that the introduction of the graduality threshold considerably reduces the execution time, the memory usage and the number of frequent gradual patterns on databases with a small or large number of attributes ; in addition, these results show that it is easier for the end user or the expert to analyze the frequent gradual patterns extracted from approaches taking the gradual threshold into account than the approaches not taking this threshold into account.

\subsubsection{Comparative experiments : temporal databases.}
The next experiment compares the execution time, the memory usage and the number of frequent gradual patterns for \textit{GRAPGT} and \textit{T-GPatterns} on temporal databases Paleo and Air quality (cf Table \ref{dataset}). It is good to remember that the choice to compare our approach with the approaches of extraction of frequent gradual patterns on numerical temporal databases is due to the fact that the latter aims at the same objective as us : extract useful and relevant informations from a numerical database that respect certain constraints of the application domain. Taking these constraints into account has a positive impact for users. It makes it possible to reduce the number of frequent gradual patterns facilitating their analysis by the users.
%to reduce the number of frequent gradual patterns without loss of information.
\begin{figure}[!h]
    \begin{minipage}[t]{3cm}
		\centering
		\includegraphics[scale=0.18]{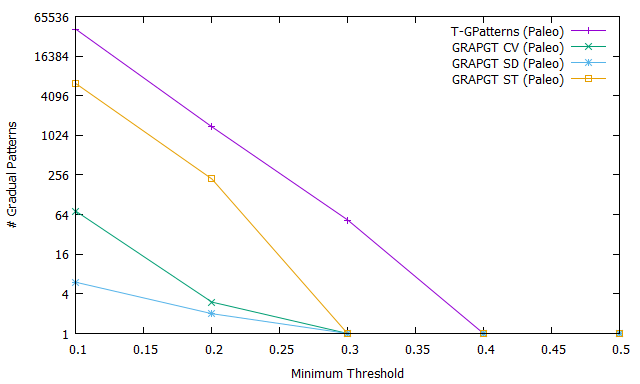}
		\subcaption{}
	\end{minipage}
	\hspace{1cm}
	\begin{minipage}[t]{3cm}
		\centering
		\includegraphics[scale=0.18]{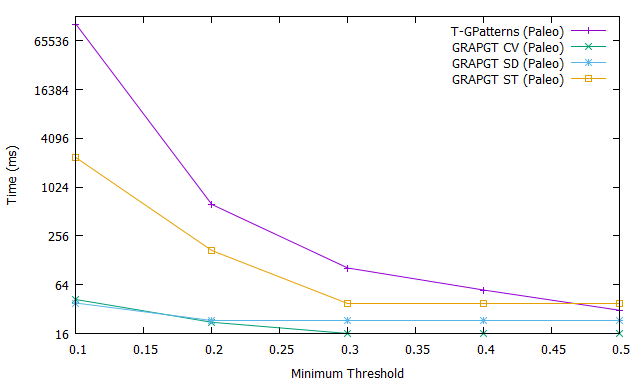}
		\subcaption{} 
  \end{minipage}
  \hspace{1cm}
	\begin{minipage}[t]{3cm}
		\centering
		\includegraphics[scale=0.18]{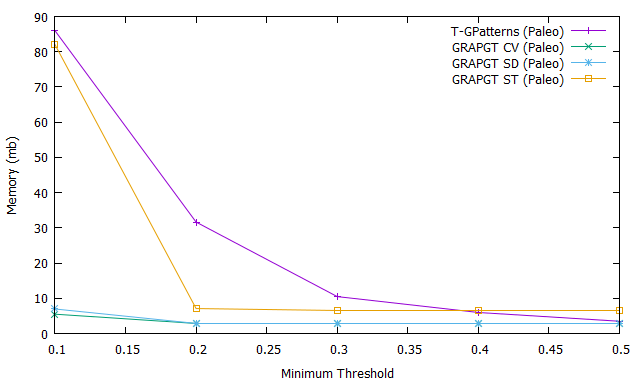}
		\subcaption{} 
  \end{minipage}
  
	\caption{Comparative study in number of frequent gradual patterns (a), in time (b) and in memory usage (c) of four algorithms on the Paleoecologique database.}\label{fig:paleo}
\end{figure}

Thus, Fig \ref{fig:paleo}.a shows the number of frequent gradual patterns for \textit{T-GPatterns} algorithm (purple curve), \textit{GRAPGT} CV algorithm (green curve), \textit{GRAPGT} SD algorithm (blue curve), and \textit{GRAPGT} ST algorithm (orange curve) on the paleoecology database when varying the support. In this Figure, when minimum support  is equal to $0.1$, the number of frequent gradual patterns extract by T-GPatterns without take into account  gradual threshold is equal to $41867$; this number is  considerably reduce when the gradual threshold is taken into account : it is equal to $72$ when the gradual threshold is define by equation \ref{formule} (\textit{GRAPGT} CV), $6$ when the gradual is defined by equation \ref{formule3} (\textit{GRAPGT} SD) and $6255$ when the gradual threshold is defined by equation \ref{formule2} (\textit{GRAPGT} ST). Similarly, when the support increases, the number of frequent gradual patterns extracted from T-GPatterns remains greater than the number extracted from \textit{GRAPGT} CV, \textit{GRAPGT} SD and \textit{GRAPGT} ST. In this context, these results show that it is easier for an user or an expert to analyze the frequent gradual patterns extracted from approaches taking the graduality threshold into account than the approaches not taking this threshold into account.

Fig. \ref{fig:paleo}.b shows the evolution of the extraction time of the frequent gradual patterns of each algorithm as a function of the variation of the support. Here the end users will also save time if they integrate the gradual threshold into the mining process; this is easily seen in this Fig. \ref{fig:paleo}.b. Fig. \ref{fig:paleo}.c shows the memory usage for these four algorithms when varying support. It is easy to see that our \textit{GRAPGT} algorithm consumes less memory than the T-GPatterns algorithm.
\begin{figure}[!h]
    \begin{minipage}[t]{3cm}
		\centering
		\includegraphics[scale=0.18]{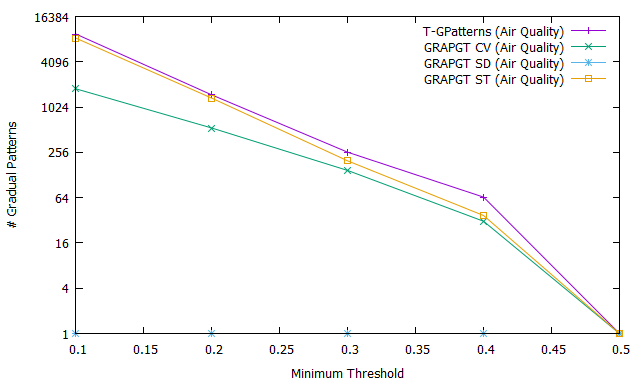}
		\subcaption{}
	\end{minipage}
	\hspace{1cm}
	\begin{minipage}[t]{3cm}
		\centering
		\includegraphics[scale=0.18]{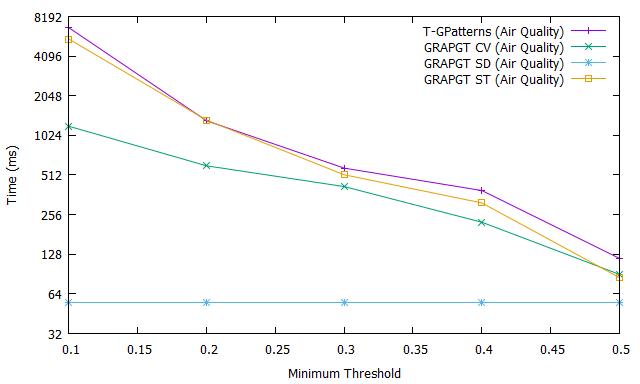}
		\subcaption{}
  \end{minipage}
  \hspace{1cm}
  \begin{minipage}[t]{3cm}
		\centering
		\includegraphics[scale=0.18]{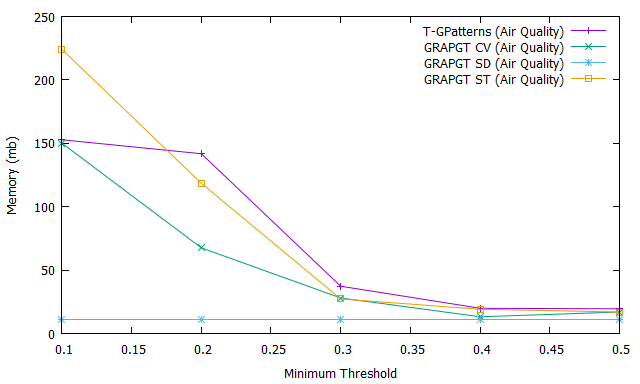}
		\subcaption{} 
  \end{minipage}
	\caption{Comparative study in number of frequent gradual patterns (a), in time (b) and in memory usage (c) of four algorithms on the Air quality database.}\label{fig:air}
\end{figure}

Fig \ref{fig:air} shows the evolution of the number (Fig. \ref{fig:air}.a), the execution time (Fig. \ref{fig:air}.b) and the memory usage (Fig. \ref{fig:air}.c) of extraction frequent gradual patterns as a function of the variation of the support threshold after application of the T-GPatterns \cite{lonlac2018approach} algorithm (purple curve), \textit{GRAPGT} CV algorithm (green curve), \textit{GRAPGT} SD algorithm (blue curve), and GRAPGT ST algorithm (orange curve) on the air quality database. Unlike the paleoecological database, this database has a large number of attributes ($9358$).

The objective here is to show the impact of the gradual threshold on databases with a large number of attributes. We can easily see that the curves of the graph are monotonically decreasing which means that the number of frequent gradual patterns decreases when the support increases. In addition, the introduction of the gradual threshold considerably reduces the extraction time and number of frequent gradual patterns.

\subsubsection{Consistent gradual patterns : }
In this section, we present a list of the consistent (cf. Definition \ref{def:consistence}) gradual patterns that we were able to extract in the different databases that we presented above. We recall that a gradual pattern is said to be consistent if it has a great power of variation (increase or decrease).
\begin{table}[!h]
    \begin{center}
    \renewcommand{\arraystretch}{1.5}
       \begin{tabular}{|p{3cm}|p{9cm}|}
         \hline
             \textbf{Datasets} & \textbf{Consistent gradual patterns} \\
         \hline
           Paleo &
                $Corylus^< Quercus^>$;
                $Quercus^< Fagus^>$; 
                $Quercus^< Poaceae^>$
                \\
            \hline
           Cancer &
                $perimeter_mean^> area_mean^>$; 
                $area_mean^> texture_worst^>$;
                $radius_worst^> texture_worst^>$
                \\
            \hline
           Air quality &
                $PT08.S1^> C_6H_6^>$; 
                $PT08.S1^> C_6H_6^> PT08.S2^>$;
                $PT08.S1^> C_6H_6^> PT08.S2^> NO2^>$
                \\
            \hline
          ParaMiner-data &
                $4411^> 4412^>$; 
                $4411^> 4412^<$
                \\
            \hline
        \end{tabular}
    \end{center}
    \caption{Consistent gradual patterns extracted from Table \ref{exampledataset1} using Table \ref{tab:2} }\label{tab:consistent}
 \end{table}
 
%From all its results made on different databases each with a specificity, we can say that the introduction of the gradual threshold in the mining process considerably reduces the number of frequent gradual patterns and thus facilitates the analysis of end users.

\section{Discussion}
\label{sec:discussion}
%As a summary,
\subsection{Memory space}
The memory space used to keep the threshold vectors seems to augment the memory needs of the algorithm. This does not change the space complexity of the algorithm. But in fact during the experiment, we observe in some cases existing algorithms (ParaMiner) does not execute due to the memory space insufficiency while with the threshold introduction, our method runs and out some itemsets.
\subsection{Execution time and complexity}
The imprecision processing by the threshold introduction augments one step in the algorithm. Theoretically, the overall algorithm complexity does not change. The imprecision processing could also augments the execution time; but it is not the case. The introduced threshold permits to reduce the density of the intermediate (binary) matrix, which permit to reduce the execution time of the remained steps. This explains why for all datasets used in the execution time of the GRAPT algorithm is always lower than other algorithms.
\subsection{Extracted gradual itemsets}
The GRAPGT proposed due to the threshold introduction provides less gradual itemsets than the classical method. It then eliminates some gradual itemsets whose interpretation could also be useful for an expert. But the fact that the number of resulted gradual itemsets is lower, facilitates the expert interpretation work. So instead of providing a million of itemsets, a few are presented to the expert and he can easily manage it.
\section{Conclusion and Perspectives}
\label{sec:conclu}
This paper proposes an approach to automatically extract gradual patterns by taking into account the user preferences about each attribute of the database during the mining process.
An algorithm named  \textit{GRAPGT (GRAdual Patterns with Gradualness Threshold)} based on the integration of the constraints of variation threshold from which to consider a gradualness (increase/decrease) into traditional gradual patterns mining algorithms  was proposed for extracting these patterns to avoid drawbacks of traditional gradual patterns mining algorithms on some data like noisy data; the graduality between two objects is no longer considered simply in terms of increase or decrease, but it is considered if and only if, the difference in attribute value between two objects is greater than a certain quantity (gradual threshold).
 Experimental results obtained on several real world databases have shown that the introduction of the gradual threshold in the gradual pattern mining process not only significantly reduces the amount of frequent gradual patterns to be analyzed, but also saves considerable time and memory.
 The proposed algorithm can returns a small set of gradual patterns to the user while filtering many  patterns that are not meeting specific gradualness requirements. Moreover, it also allows the generation of gradual patterns on certain large databases where some algorithms in the literature fail for the reason of the search space very huge as show the experimental results (\textit{ParaMiner} run on the ParaMiner-Data database fail). 
 However, it would be interesting to study the impact of the gradualness threshold on the quality of frequent gradual patterns, and also on the choice of support threshold. This last point is a matter of work in progress.

%We have presented in this paper \textit{GRAPGT (GRAdual Patterns with Gradualness Threshold)}, a new approach to automatically extract gradual patterns, taking into account the user preferences about each attribute of the database during the mining process by integrating a variation threshold from which to consider a gradualness (increase/decrease) ; here, the graduality between two transactions is no longer considered simply in terms of increase or decrease, but it is considered if and only if, the difference in attribute value between two transactions is greater than a quantity (gradual threshold). We have shown how to integrate the gradual threshold into frequent gradual pattern mining approaches: firstly on temporal databases (T-GpPatterns) and secondly on non-temporal databases (ParaMiner). Experimental results have shown that the introduction of the gradual threshold in the gradual pattern mining process not only significantly reduces the amount of frequent gradual patterns to be analyzed, but also saves considerable time. It also allows the generation of gradual patterns where certain algorithms in the literature fail (ParaMiner on the ParaMiner-Data database). However, it would be interesting to study the impact of the graduality threshold on the quality of frequent gradual patterns, and also on the choice of support threshold. This last point is a matter of work in progress.

\bibliographystyle{IEEEtran}
\bibliography{IEEEabrv,biblio}

\end{document}